\newcommand{\xmark}{\textcolor{red}{\ding{55}}\xspace}%
\newcommand{\cond}[1]{{\bf \textsf{C#1}}\xspace}
\newcommand{\mclass}{\mathcal{M}}
\newcommand{\loss}{\ell}
\newcommand{\Loss}{\mathcal{L}}
\newcommand{\alg}{\mathcal{A}}
\newcommand{\falg}{\mathcal{F}}
\newcommand{\halg}{\mathcal{H}}
\newcommand{\btheta}{\boldsymbol{\theta}}
\newcommand{\bphi}{\boldsymbol{\phi}}
\newcommand{\bTheta}{\boldsymbol{\Theta}}
\newcommand{\bPhi}{\boldsymbol{\Phi}}
\newcommand{\E}{\mathbb{E}}
\newcommand{\R}{\mathbb{R}}
\newcommand{\Z}{\mathbb{Z}}
\newcommand{\C}{\mathbb{C}}
\newcommand{\tsum}{\textstyle{\sum}}
\newcommand{\flora}{{\sf FLoRA}\xspace}
\newcommand{\flhpo}{{\sf FL-HPO}\xspace}
\numberwithin{equation}{section}
\theoremstyle{plain}
\newtheorem{theorem}{Theorem}[section]
\newtheorem{proposition}[theorem]{Proposition}
\newtheorem{corollary}[theorem]{Corollary}
\theoremstyle{definition}
\newtheorem{definition}[theorem]{Definition}
\newtheorem{assumption}[theorem]{Assumption}
\theoremstyle{remark}
\title{
Single-shot Hyper-parameter Optimization for\\
Federated Learning:\\ A General Algorithm \& Analysis}
\author{%
  Yi Zhou
  \qquad
  Parikshit Ram
  \qquad
  Theodoros Salonidis
  \\
  Nathalie Baracaldo
  \qquad
  Horst Samulowitz
  \qquad
  Heiko Ludwig\\
  {\sf {IBM Research}}\\
  \texttt{\{yi.zhou, Parikshit.Ram\}@ibm.com}\\
  \texttt{\{tsaloni, baracald, samulowitz, hludwig\}@us.ibm.com}
}
\begin{document}
\maketitle

\begin{abstract}
We address the relatively unexplored problem of hyper-parameter optimization (HPO) for federated learning (\flhpo). We introduce {\bf F}ederated {\bf Lo}ss Su{\bf R}face {\bf A}ggregation (\flora), a general \flhpo  solution framework that can address use cases of tabular data and any Machine Learning (ML) model including gradient boosting training algorithms and therefore further expands the scope of \flhpo. 
\flora enables single-shot \flhpo: identifying a single set of good hyper-parameters that are subsequently used in a {\em single} FL training. Thus, it enables \flhpo solutions with minimal additional communication overhead compared to FL training without HPO. We theoretically characterize the optimality gap of \flora, which explicitly accounts for the heterogeneous non-iid nature of the parties' local data distributions, a dominant characteristic of FL systems. Our empirical evaluation of \flora for multiple ML algorithms on seven OpenML datasets demonstrates significant model accuracy improvements over the considered baseline, and robustness to increasing number of parties involved in \flhpo training.
\end{abstract}

\section{Introduction} \label{sec:prob-setup}

Traditional machine learning (ML) approaches require training data to be gathered at a central location where the learning algorithm runs. In real world scenarios, however, training data is often subject to privacy or regulatory constraints restricting the way data can be shared, used and transmitted. Examples of such regulations include the European General Data Protection Regulation (GDPR), California Consumer Privacy Act (CCPA), Cybersecurity Law of China (CLA) and HIPAA, among others.
Federated learning (FL), first proposed in~\citet{mcmahan2017communication}, has recently become a popular approach to address privacy concerns by allowing collaborative training of ML models among multiple parties where each party can keep its data private.

\paragraph{\flhpo problem.} Despite the privacy protection FL brings along, there are many open problems in FL domain~\citep{kairouz2019advances,khodak2021federated}, one of which is hyper-parameter optimization for FL.
Existing FL systems require a user (or all participating parties) to pre-set (agree on) multiple hyper-parameters (HPs) (i)~for the model being trained (such as number of layers and batch size for neural networks or tree depth and number of trees in tree ensembles), and (ii)~for the the aggregator (if such hyper-parameters exist).  Hyper-parameter optimization (HPO) for FL is important because the choice of HPs can have dramatic impact on performance \citep{mcmahan2017communication}. 

While HPO has been widely studied in the centralized ML setting, it comes with unique challenges in the FL setting.  First, existing HPO techniques for centralized training often make use of the entire dataset, which is not available in FL. Secondly, they train a vast variety of models for a large number of HP configurations which would be prohibitively expensive in terms of communication and training time in FL settings. Thirdly, one important challenge that has not been adequately explored in FL literature is support for tabular data, which are widely used in enterprise settings \citep{ludwig2020ibm}.
One of the best models for this setting is based on gradient boosting tree algorithms~\citep{friedman2001greedy} which are different from the stochastic gradient descent algorithm used for neural networks.  Recently, a few approaches have been proposed for \flhpo,
however they focus on handling HPO using personalization techniques~\citep{khodak2021federated} and neural networks~\citep{khodak2020weight}. 
To the best of our knowledge, there is no HPO approach for FL systems to train non-neural network models, such as XGBoost~\citep{xgboost} that is particularly common in the enterprise setting.
%
\paragraph{Scope.} 
In this paper, we address the aforementioned challenges of \flhpo. 
We focus on the problem where the model HPs are shared across all parties
and we seek a set of HPs and train a single model that is eventually used by all parties for testing/deployment.
Moreover, we impose three further requirements that make the problem more challenging: 
%
(\cond{1})~we {\em do not make any assumption} that two models with different HPs can perform some form of ``weight-sharing'' (which is a common technique used in various HPO and neural architecture search (NAS) schemes for neural networks to reduce the computational overhead of HPO and NAS), allowing our solution to be applied beyond neural networks~\citep{khodak2020weight}. 
(\cond{2})~we seek to {\bf perform ``single-shot'' \flhpo}, where we have {\em limited} resources (in the form of computation and communication overhead) which allow training only a single model via federated learning (that is, a single HP configuration), and 
(\cond{3})~we {\em do not assume that parties have independent and identically distributed (IID) data distributions}.
%
%
\paragraph{Contributions.}
Given the above \flhpo problem setting, we make the following contributions:
\begin{itemize}[noitemsep,topsep=0pt]
\item (\S \ref{sec:method}) We present a novel framework {\em {\bf F}ederated {\bf Lo}ss Su{\bf R}face {\bf A}ggregation} (\flora) that leverages meta-learning techniques to utilize {\bf local and asynchronous} HPO on each party to perform single-shot HPO for the global \flhpo problem. 
\item (\S \ref{sec:og_analysis}) We provide theoretical guarantees for the set of HPs selected by \flora covering both IID and Non-IID cases. To the best of our knowledge, this is the first rigorous theoretical analysis for \flhpo problem and also the first optimality gap constructed in terms of the estimated loss given a target distribution.
\item (\S \ref{sec:emp}) We evaluate \flora on the \flhpo of Gradient Boosted Decision Trees~(GBDTs), Support Vector Machines~(SVMs) and Multi-layered Perceptrons~(MLPs) on seven classification datasets from OpenML~\citep{OpenML2013}, highlighting (i) its performance relative to the baseline, (ii) the effect of various choices in this scheme, and (iii) the effect of the number of parties on the performance.
\end{itemize}

\section{Related work}\label{sec:related}
%
%
\paragraph{Performance optimization of FL systems.} One of the main challenges in FL is achieving high accuracy and low communication overhead. FedAvg~\citep{pmlr-v54-mcmahan17a} is a predominant algorithm used for training in FL and several optimization schemes build on it. It is executed in multiple global rounds. At each round, the clients perform stochastic gradient descent (SGD) updates on their parameters based on their local objective functions. They subsequently send their updates to the server, which averages them and transmits their mean back to the clients. Several approaches have been devised for optimizing the communication performance of FL systems. Initially, communication optimizations included performing multiple SGD local iterations at the clients and randomly selecting a small subset of the clients to compute and send updates to the server~\citep{pmlr-v54-mcmahan17a}. Subsequently, compression techniques were used to minimize the size of model updates to the server. It has been shown that the accuracy and communication performance of these techniques depend highly on their HPs~\citep{pmlr-v54-mcmahan17a}.
\paragraph{\flhpo approaches.}
Recent optimization approaches adapt HPs such as the local learning rate at each client~\citep{Koskela19Learning,Mostafa19Robust,Reddi20Adaptive}, the number of local SGD iterations (which affect the frequency of server updates)~\citep{wang2019adaptive}. 
In~\citet{Dai20FBO,Dai21Differentially}, Dai et.al. address Federated Bayesian Optimization. Although using HPO with multiple HPs, the problem setup is quite different than Federated Learning: they focus on a single party using information from other parties to accelerate its own Bayesian Optimization, instead of building a model for all parties. Federated Network Architecture Search (FNAS) approaches search for architectural HPs of deep learning CNN models by running locally NAS algorithms and then aggregating the NAS architecture weights and model weights using FedAvg~\citep{He20toward,Garg20direct,Xu20federated}. These approaches have shown empirical gains but lack theoretical analysis. 
Inspired from the NAS technique of weight-sharing,~\citep{khodak2020weight, khodak2021federated} proposed FedEx, a \flhpo framework to accelerate a general HPO procedure, i.e., successive halving algorithm (SHA), for many SGD-based FL algorithms. Fedex focuses on building personalized models for parties by tuning local HPs of the parties. They provide a theoretical analysis for a special case of tuning a single HP (learning rate) in a convex optimization setting. 

Our framework improves on the above approaches in several ways. {\bf 1) It is more general}, as it can tune multiple HPs and is applicable to non SGD-training settings such as gradient boosting trees. This is achieved by treating \flhpo as a black-box HPO problem, which has been addressed in centralized HPO literature using grid search, random search~\citep{bergstra2012random} and Bayesian Optimization approaches~\citep{shahriari2016taking}.  The key challenge is the requirement to perform computationally intensive evaluations on a large number of HPO configurations, where each evaluation involves training a model and scoring it on a validation dataset. In the distributed FL setting  this problem is exacerbated because validation sets are local to the parties and each FL training/scoring evaluation is communication intensive.  Therefore a brute force application of centralized black-box HPO approaches that select hyper-parameters in an outer loop and proceed with FL training evaluations is not feasible.   
{2) \bf It yields minimal HPO communication overhead.} This is achieved by building a loss surface from local asynchronous HPO at the parties that yields a single optimized HP configuration used to train a global model with a single FL training. {\bf 3) It is the first that theoretically characterizes optimality gap in an \flhpo setting}, for the case we focus in this paper (creating a global model by tuning multiple global HPs).

\section{Methodology} \label{sec:method}

In the centralized ML setting, we would consider a model class $\mclass$ and its corresponding learning algorithm $\alg$ parameterized collectively with HPs $\btheta \in \bTheta$, and given a training set $D$, we can learn a single model $\alg(\mclass, \btheta, D) \to m \in \mclass$. Given some predictive loss $\Loss(m, D')$ of any model $m$ scored on some holdout set $D'$, the centralized HPO problem can be stated as
\begin{equation}
\label{eq:central-HPO}
\min\nolimits_{\btheta \in \bTheta} \Loss(\alg(\mclass, \btheta, D), D').
\end{equation}
In the most general FL setting, we have $p$ parties $P_1, \dots, P_p$ each with their private local training dataset $D_i, i \in [p] = \{1, 2, \ldots, p\}$. Let $D = \cup_{i=1}^p D_i$ denote the aggregated training dataset and $\overline{D} = \{D_i\}_{i \in [p]}$ denote the set of per-party datasets. Each model class (and corresponding learning algorithm) is parameterized by global HPs $\btheta_G \in \bTheta_G$ shared by all parties and per-party local HPs $\btheta_L^{(i)} \in \bTheta_L, i \in [p]$ with $\bTheta = \bTheta_G \times \bTheta_L$. FL systems usually include an aggregator with its own set of HPs $\bphi \in \bPhi$. Finally, we would have a FL algorithm 
\begin{equation}\label{eq:def-fl-alg}
    \falg \left( \mclass, \bphi, \btheta_G, \{ \btheta_L^{(i)} \}_{i \in [p]}, \alg, \overline{D} \right) \to m \in \mclass,
\end{equation}
which takes as input all the relevant HPs and per-party datasets and generates a model. in this case, the \flhpo problem can be stated in the two following ways depending on the desired goals: 
(i)~Ideally, for a global holdout dataset $D'$ (a.k.a validation set, possibly from the same distribution as the aggregated dataset $D$), the target problem is:
\begin{equation}
\label{eq:fl-hpo-1}
\min_{\bphi \in \bPhi, \btheta_G \in \bTheta_G, \btheta_L^{(i)} \in \bTheta_L, i \in [p]}
\Loss\left( \falg \left( 
\mclass, \bphi, \btheta_G, \{ \btheta_L^{(i)} \}_{i \in [p]}, \alg, \overline{D}
\right), D' \right).
\end{equation}
%
(ii)~An alternative target problem would involve per-party holdout datasets $D_i', i \in [p]$ as follows:
\begin{equation}
\label{eq:fl-hpo-2}
\min_{\bphi \in \bPhi, \btheta_G \in \bTheta_G, \btheta_L^{(i)} \in \bTheta_L, i \in [p]}
\mathsf{Agg}\left(
\left\{
  \Loss\left( \falg \left(
  \mclass, \bphi, \btheta_G, \{ \btheta_L^{(i)} \}_{i \in [p]}, \alg, \overline{D}
  \right), D'_i \right),
  i \in [p] \right\}
\right),
\end{equation}
%
where $\mathsf{Agg}: \R^p \to \R$ is some aggregation function (such as average or maximum) that scalarizes the $p$ per-party predictive losses.

%


Contrasting problem \eqref{eq:central-HPO} to problems \eqref{eq:fl-hpo-1} \& \eqref{eq:fl-hpo-2}, we can see that the \flhpo is significantly more complicated than the centralized HPO problem. In the ensuing presentation, we focus on problem~\eqref{eq:fl-hpo-1} although our proposed single-shot \flhpo scheme can be applied and evaluated for problem~\eqref{eq:fl-hpo-2}. We simplify the \flhpo problem in the following ways: (i)~we assume that there is no personalization so there are no per-party local HPs $\btheta_L^{(i)}, i \in [p]$, (ii)~we only focus on the model class HPs $\btheta_G$, deferring HPO for aggregator HPs $\bphi$ for future work, and (iii)~we assume there is a global holdout/validation set $D'$ which is only used to evaluate the final global model's performance but {\em can not be accessed} during HPO process.
Hence the problem we will study is stated as for a fixed aggregator HP $\bphi$:
\begin{equation} \label{eq:fl-hpo-1a}
\min\nolimits_{\btheta_G \in \bTheta_G}
\Loss\left( \falg \left( 
\mclass, \bphi, \btheta_G, \alg, \overline{D}
\right), D' \right).
\end{equation}
This problem appears similar to the centralized HPO problem~\eqref{eq:central-HPO}. However, note that the main challenges in \eqref{eq:fl-hpo-1a} is (i)~the need for a federated training for each set of HPs $\btheta_G$, and (ii)~the need to evaluate the trained model on the global validation set $D'$ (which is usually not available in usual \flhpo setting). Hence it is not practical (from a communication overhead and functional  perspective) to apply existing off-the-shelf HPO schemes to problem~\eqref{eq:fl-hpo-1a}. In the subsequent discussion, for simplicity purposes, we will use $\btheta$ to denote the global HPs, dropping the ``$G$'' subscript.

\subsection{Leveraging local HPOs}
\begin{algorithm}[t]
\caption{Single-shot \flhpo with Federated Loss Surface Aggregation (\flora)}
\label{alg:fl-hpo-lsa}
\begin{algorithmic}[1]
\STATE {\bf Input:}{$\bTheta, \mclass, \alg, \falg, \{ (D_i, D_i') \}_{i \in [p]}, T$}
  \FOR {each party $P_i, i \in [p]$}
    \STATE Run HPO to generate $T$ (HP, loss) pairs%
    \begin{equation}\label{eq:local-hpo-data}
    E^{(i)} = \left \{
    (\btheta_t^{(i)}, \Loss_t^{(i)}), t \in [T]
    \right\},
    \end{equation}%
    where $\btheta_t^{(i)} \in \bTheta,
    \Loss_t^{(i)} := \Loss(\alg(\mclass, \btheta_t^{(i)}, D_i), D_i')$.
  \ENDFOR
  \STATE Collect all $E = \{E^{(i)}, i \in [p] \}$ in aggregator 
  \STATE Generate a unified loss surface $\widehat \ell:\bTheta \to \R$ using $E$
  \STATE Select best HP candidate 
  \begin{align}\label{eq:def:flora-thetas}
      \widehat \btheta^\star \gets \arg \min\limits_{\btheta \in \bTheta} \widehat \ell(\btheta).
  \end{align} 
  \STATE Invoke federated training $m \gets \falg(\mclass, \widehat\btheta^\star, \alg, \overline{D})$
  \STATE {\bf Output:} FL model $m$.
\end{algorithmic}
\end{algorithm}
While it is impractical to apply off-the-shelf HPO solvers (such as Bayesian Optimization (BO)~\citep{shahriari2016taking}, Hyperopt~\citep{bergstra2011algorithms}, SMAC~\citep{hutter2011sequential}, and such), we wish to understand how we can leverage local and asynchronous HPOs in each of the parties. We begin with a simple but intuitive hypothesis underlying various meta-learning schemes for HPO~\citep{vanschoren2018meta,wistuba2018scalable}: {\em if a HP configuration $\btheta$ has good performance for all parties independently, then $\btheta$ is a strong candidate for federated training}.

With this hypothesis, we present our proposed algorithm {\bf \flora} in Algorithm~\ref{alg:fl-hpo-lsa}. In this scheme, we allow each party to perform HPO locally and asynchronously with some adaptive HPO scheme such as BO (line 3). Then, at each party $i \in [p]$, we collect all the attempted $T$ HPs $\btheta_t^{(i)}, t \in [T] = \{1, 2, \ldots, T\}$ and their corresponding predictive loss $\Loss_t^{(i)}$ into a set $E^{(i)}$ (line 3, equation~\eqref{eq:local-hpo-data}). Then these per-party sets of (HP, loss) pairs $E^{(i)}$ are collected at the aggregator (line 5). This operation has at most $O(pT)$ communication overhead (note that the number of HPs are usually much smaller than the number of columns or number of rows in the per-party datasets). These sets are then used to generate an aggregated loss surface $\widehat \ell: \bTheta \to \R$ (line 6) which will then be used to make the final single-shot HP recommendation $\widehat \btheta^\star \in \bTheta$ (line 7) for the federated training to create the final model $m \in \mclass$ (line 8). We will discuss the generation of the aggregated loss surface in detail in \S \ref{sec:method:lsa}. Before that, we briefly want to discuss the motivation behind some of our choices in Algorithm~\ref{alg:fl-hpo-lsa}.
\paragraph{Why adaptive HPO?}
The reason to use adaptive HPO schemes instead of non-adaptive schemes such as random search or grid search is that this allows us to efficiently approximate the local loss surface more accurately (and with more certainty) in regions of the HP space where the local performance is favorable instead of trying to approximate the loss surface well over the complete HP space. This has advantages both in terms of computational efficiency and loss surface approximation.
%
\paragraph{Why asynchronous HPO?}
Each party executes HPO asynchronously, without coordination with HPO results from other parties or with the aggregator. This is in line with our objective to minimize communication overhead. Although there could be strategies that involve coordination between parties, they could involve many rounds of communication. Our experimental results show that this approach is effective for the datasets we evaluated for.

%
\subsection{Loss surface aggregation} \label{sec:method:lsa}
Given the sets of (HP, loss) pairs $E^{(i)} = (\btheta_t^{(i)}, \Loss_t^{(i)}), i \in [p], t \in [T]$ at the aggregator, we wish to construct a loss surface $\widehat \ell : \bTheta \to \R$ that best emulates the (relative) performance loss $\widehat \ell(\btheta)$ we would observe when training the model on $\overline{D}$. Based on our hypothesis, we want the loss surface to be such that it would have a relatively low $\widehat \ell(\btheta)$ if $\btheta$ has a low loss for all parties simultaneously. However, because of the asynchronous and adaptive nature of the local HPOs, for any HP $\btheta \in \bTheta$, we would not have the corresponding losses from all the parties. For that reason, we will model the loss surfaces using regressors that try to map any HP to their corresponding loss. In the following, we present four ways of constructing such loss surfaces:
\paragraph{Single global model (SGM).}
We merge all the sets $ E= \cup_{i\in [p]} E^{(i)}$ and use it as a training set for a regressor $f: \bTheta \to \R$, which considers the HPs $\btheta \in \bTheta$ as the covariates and the corresponding loss as the dependent variable. For example, we can train a random forest regressor~\citep{breiman2001random} on this training set $E$. Then we can define the loss surface $\widehat \ell(\btheta) := f(\btheta)$. While this loss surface is simple to obtain, it may not be able to handle Non-iid party data distribution well: it is actually overly optimistic -- under the assumption that every party generates unique HPs during the local HPO, this single global loss surface would assign a low loss to any HP $\btheta$ which has a low loss at any one of the parties. This implies that this loss surface would end up recommending HPs that have low loss in just one of the parties, but not necessarily on all parties.
\paragraph{Single global model with uncertainty (SGM+U).}
Given the merged set $E= \cup_{i\in [p]} E^{(i)}$, we can train a regressor that provides uncertainty quantification around its predictions (such as Gaussian Process Regressor~\citep{williams2006gaussian}) as $f: \bTheta \to \R, u: \bTheta \to \R_+$, where $f(\btheta)$ is the mean prediction of the model at $\btheta \in \bTheta$ while $u(\btheta)$ quantifies the uncertainty around this prediction $f(\btheta)$. We define the loss surface as $\widehat \ell(\btheta) := f(\btheta) + \alpha \cdot u(\btheta)$ for some $\alpha > 0$. This loss surface does prefer HPs that have a low loss even in just one of the parties, but it penalizes a HP if the model estimates high uncertainty around this HP. Usually, a high uncertainty around a HP would be either because the training set $E$ does not have many samples around this HP (implying that many parties did not view the region containing this HP as one with low loss), or because there are multiple samples in the region around this HP but parties do not collectively agree that this is a promising region for HPs. Hence this makes SGM+U more desirable than SGM, giving us a loss surface that estimates low loss for HPs that are simultaneously thought to be promising to multiple parties.
\paragraph{Maximum of per-party local models (MPLM).}
Instead of a single global model on the merged set $E$, we can instead train a regressor $f^{(i)}: \bTheta \to \R, i \in [p]$ with each of the per-party set $E^{(i)}$. Given this, we can construct the loss surface as $\widehat \ell(\btheta) := \max_{i \in [p]} f^{(i)}(\btheta)$. This can be seen as a much more pessimistic loss surface, assigning a low loss to a HP only if it has a low loss estimate across all parties.
\paragraph{Average of per-party local models (APLM).}
A less pessimistic version of MPLM would be to construct the loss surface as the average of the per-party regressors $f^{(i)}, i \in [p]$ instead of the maximum, defined as $\widehat \ell(\btheta) := \nicefrac{1}{p} \sum_{i=1}^p f^{(i)}(\btheta)$. This is also less optimistic than SGM since it will assign a low loss for a HP only if its average across all per-party regressors is low, which implies that all parties observed a relatively low loss around this HP.

Intuitively, we believe that loss surfaces such as SGM+U or APLM would be the most promising while the extremely optimistic and pessimistic  SGM and MPLM respectively would be relatively less promising, with MPLM being superior to SGM. In the following section, we theoretically quantify the performance guarantees for MPLM and APLM, and in \S \ref{sec:emp},  we evaluate all these loss surface empirically in the single-shot \flhpo setting.

\section{Optimality analysis}\label{sec:og_analysis}
%
%
%
In this section, we provide a rigorous analysis of the sub-optimality of the HP selected by \flora.
Let us first define some notation we will use throughout this section.
\begin{definition}[Loss functions]
For a given set of parties' data
$\overline{D} = \{D_i\}_{i \in [p]}$ and any $\btheta \in \bTheta$, the true target loss (any predictive performance metric, such as, the training loss) can be expressed as:
\begin{equation}\label{eq:def:loss}
\loss(\btheta, \mathcal{D}) :=
 \underbrace{\E_{(x, y) \sim \mathcal{D}}}_{\text{test perf. of trained model}}
\Loss(\underbrace{\alg(\btheta, \overline D)}_{\text{trained model}}, (x, y)).
\end{equation}
Here $\mathcal{D}$ is the data distribution of the test points. Let $\tilde \loss(\btheta, \mathcal{D})$ be an estimate of the loss defined in \eqref{eq:def:loss} given some validation (holdout) set $D'$ sampled from $\mathcal{D}$, which is the model performance metric during evaluation and/or inference time.
\end{definition}
We assume the parties' training sets are collected before the federated learning such that $\overline D$ is fixed and unchanged during the HPO and FL processes, in order words, we do not consider streaming data setting.

Now we are ready to provide a more general definition of the unified loss surface constructed by \flora as follows:
\begin{definition}[Unified loss surface]
Given the local loss surfaces $\widehat \loss_i: \bTheta \to \R$ for each party $i \in [p]$ generated by $T$ (HP, loss) pairs 
$\{(\btheta^{(i)}_t, \Loss_t^{(i)}) \}_{t\in[T]}$, we can define the global loss surface $\widehat \loss : \bTheta \to \R$ as
\begin{equation}
    \label{eq:def:global-loss}
    \widehat \loss(\btheta) = \tsum_{i=1}^p \alpha_i(\btheta) \cdot \widehat \loss_i(\btheta), \alpha_i(\btheta) \in [0, 1], \tsum_{i=1}^{p} \alpha_i(\btheta) = 1.
\end{equation}
In particular, 
\begin{itemize}[noitemsep,topsep=0pt]
    \item[i)] If $\alpha_i(\btheta) = \nicefrac{1}{p}, \ \forall i \in [p], \btheta \in \bTheta$, then this reduces to APLM loss surface.
    \item[ii)] If $\alpha_i(\btheta) = \mathbb{I} \left (\widehat\loss_i(\btheta) = \max_{j \in [p]} \widehat\loss_j(\btheta) \right)$, then this reduces to the MPLM loss surface (assuming all $\widehat\ell_j(\btheta)$s are unique).
\end{itemize}
\end{definition}


We formalize the distance metric used in our analysis to evaluate the distance between two given data distributions.
\begin{definition}[1-Wasserstein distance~\citep{villani2003topics}]
For two distributions $\mu, \nu$ with bounded support, the 1-Wasserstein distance is defined as
\begin{equation} \label{eq:def:1wd}
    \mathcal W_1(\mu, \nu) := \sup_{f \in \mathsf F_1}
    \E_{x \sim \mu} f(x) - \E_{x \sim \nu} f(x),
\end{equation}
where $\mathsf F_1 = \{ f: f \text{ is continuous}, \textsf{Lipschitz}(f) \leq 1 \}$.
\end{definition}
%

To facilitate our analysis later, we make the following Lipschitzness assumptions regarding the loss function $\tilde \loss$ and also the per-party loss surface $\widehat \loss_i$.
\begin{assumption}[Lipschitzness]
For a fixed data distribution $\mathcal D$ and $\forall \btheta, \btheta' \in \bar{\bTheta} \subset \bTheta$, we have
\begin{align}
    | \tilde \loss(\btheta, \mathcal D) - \tilde \loss(\btheta', \mathcal D) |  &\leq \tilde L(\mathcal D) \cdot d(\btheta, \btheta'), \label{eq:def:theta-lip-tilde}\\
    | \widehat \loss_i(\btheta) - \widehat \loss_i(\btheta') |  &\leq \widehat L_i \cdot d(\btheta, \btheta'), \label{eq:def:theta-lip-hat}
\end{align}
where $d(\cdot, \cdot)$ is a certain distance metric defined over the hyper-parameter search space, see Appendix~\ref{app:def-dis-theta-space} for one definition.
For a fixed set of hyper-parameters $\btheta  \in \bar{\bTheta} \subset \bTheta$ and some data distributions $\mathcal D$ and $\mathcal D'$, we have
\begin{equation}\label{eq:def:dist-lip-hat}
    | \tilde \loss(\btheta, \mathcal D) - \tilde \loss(\btheta, \mathcal D') |  \leq \tilde \beta(\btheta) \cdot \mathcal W_1(\mathcal D, \mathcal D').
\end{equation}
\end{assumption}
%
\paragraph{Remark.}
Note that we explicitly use a $\bar\bTheta \subset \bTheta$ to highlight that we need Lipschitz-ness only in particular parts of the HP space. In fact, our analysis only requires the Lipschitz-ness at $\widehat \btheta^*$, the optimal HP and a HP space containing these two HPs and the set of HP tried in local HPO runs, i.e., $\btheta_t^{(i)}$, which most of the time does not cover the entire HP search space.
Moreover, the above Lipschitzness assumption w.r.t. a general HP space, which could be a combination of continuous and discrete variables, may be strong. We also show in Appendix~\ref{app:ct-theta-space} and \ref{app:alt-proof-prop-bnd-eps-i} that it can be relaxed to a milder assumption based on the modulus of continuity without significantly affecting our main results.
For simplicity, we can always assume that
$\tilde L(\mathcal D)  \leq \tilde L, \ \forall \mathcal D$ and 
$\tilde \beta(\btheta) \leq \tilde \beta, \ \forall \btheta$.

Recall that the HP $\widehat\btheta^\star $ selected by \flora is defined as in \eqref{eq:def:flora-thetas}.
We then define the optimal HP given by the estimated loss function for a desired data distribution $\mathcal{D}$ we want to learn as,
\begin{equation}\label{eq:def:global-theta}
\btheta^\star \in \arg \min_{\btheta \in \bTheta} \tilde \loss(\btheta, \mathcal{D}).
\end{equation}


We are interested in providing a bound for the following {\em optimality gap}:
\begin{equation} \label{eq:def:og}
    \tilde \loss(\widehat \btheta^\star, \mathcal{D}) - \tilde \loss(\btheta^\star, \mathcal{D}).
\end{equation}
Note that this bound is the optimality gap for the output of \flora in terms of the estimated loss $\tilde \loss$. We state our main results in the following theorem.
Informally speaking we show how to bound the optimality gap by picking the `worst-case' HP setting that maximizes the combination of Wasserstein distances of the local data distributions and actual quality of local HPO approximation across parties.

\begin{theorem}\label{thm:og-bnd}
Consider the optimality gap defined in \eqref{eq:def:og}, where $\widehat \btheta^*$ is selected by \flora with each party $i\in [p]$ collecting $T$ (HP, loss) pairs $\{(\btheta_t^{(i)}, \Loss_t^{(i)})\}_{t\in [T]}$  during the local HPO run. 
For a desired data distribution $ \mathcal{D} = \sum_{i=1}^p w_i \mathcal{D}_i$, where $\{\mathcal{D}_i\}_{i\in [p]}$ are the sets of parties' local data distributions and $w_i\in [0, 1], \forall i\in [p]$, we have 
\begin{align}\label{eq:og-bnd}
     &\tilde \loss(\widehat \btheta^\star, \mathcal{D}) - \tilde \loss(\btheta^\star, \mathcal{D})\nonumber\\
     & \ \le 2 \max_{\btheta \in \bar\bTheta}\tsum_{i\in [p]} \alpha_i(\btheta)\left\{ 
     \tilde \beta(\btheta) 
     \tsum_{j \in [p], j \not= i} w_j \mathcal W_1 (\mathcal{D}_j, \mathcal{D}_i)
     + \left( \tilde L(\mathcal{D}_i) + \widehat L_i \right)
   \min_{t \in [T]} d(\btheta, \btheta_t^{(i)})
  + \delta_i\right\},
\end{align}
where $\delta_i$ is the maximum per sample training error for the local loss surface $\widehat \loss_i$, i.e., $\delta_i = \max_t | \Loss_t^{(i)} - \widehat \loss_i (\btheta_t^{(i)}) |$. 
In particular, when all parties have i.i.d. local data distributions, \eqref{eq:og-bnd} reduces to
\begin{align*}
     & \tilde \loss(\widehat \btheta^\star, \mathcal{D}) - \tilde \loss(\btheta^\star, \mathcal{D})
     \le 2 \max_{\btheta \in \bar\bTheta}\sum_{i=1}^{p} \alpha_i(\btheta) \left\{\left( \tilde L(\mathcal{D}_i) + \widehat L_i \right) \min\limits_{t \in [T]} d(\btheta, \btheta_t^{(i)})
  + \delta_i\right\}.
\end{align*}
\end{theorem}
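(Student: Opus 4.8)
The plan is to reduce the optimality gap to a uniform approximation bound between the estimated target loss $\tilde\loss(\cdot,\mathcal D)$ and the unified surrogate $\widehat\loss$, and then to control that bound party-by-party. First I would exploit that $\widehat\btheta^\star$ minimizes $\widehat\loss$, so $\widehat\loss(\widehat\btheta^\star)\le\widehat\loss(\btheta^\star)$, and telescope through the surrogate:
\[
\tilde\loss(\widehat\btheta^\star,\mathcal D) - \tilde\loss(\btheta^\star,\mathcal D) = \bigl(\tilde\loss(\widehat\btheta^\star,\mathcal D) - \widehat\loss(\widehat\btheta^\star)\bigr) + \bigl(\widehat\loss(\widehat\btheta^\star) - \widehat\loss(\btheta^\star)\bigr) + \bigl(\widehat\loss(\btheta^\star) - \tilde\loss(\btheta^\star,\mathcal D)\bigr).
\]
The middle term is nonpositive by optimality of $\widehat\btheta^\star$, and each remaining term is at most $\sup_{\btheta\in\bar\bTheta}|\tilde\loss(\btheta,\mathcal D) - \widehat\loss(\btheta)|$, which produces the factor of $2$. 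Hence it suffices to bound this supremum by the expression inside the outer $\max$ in \eqref{eq:og-bnd}.

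Next, for a fixed $\btheta\in\bar\bTheta$ I would use $\tsum_{i}\alpha_i(\btheta)=1$ to write $\tilde\loss(\btheta,\mathcal D) - \widehat\loss(\btheta) = \tsum_{i\in[p]}\alpha_i(\btheta)\bigl(\tilde\loss(\btheta,\mathcal D) - \widehat\loss_i(\btheta)\bigr)$, so that by the triangle inequality it is enough to bound each per-party discrepancy $\tilde\loss(\btheta,\mathcal D) - \widehat\loss_i(\btheta)$. I split this into a distribution-shift part and a local-approximation part,
\[
\tilde\loss(\btheta,\mathcal D) - \widehat\loss_i(\btheta) = \bigl(\tilde\loss(\btheta,\mathcal D) - \tilde\loss(\btheta,\mathcal D_i)\bigr) + \bigl(\tilde\loss(\btheta,\mathcal D_i) - \widehat\loss_i(\btheta)\bigr).
\]
For the first part, assumption \eqref{eq:def:dist-lip-hat} gives $|\tilde\loss(\btheta,\mathcal D) - \tilde\loss(\btheta,\mathcal D_i)|\le\tilde\beta(\btheta)\,\mathcal W_1(\mathcal D,\mathcal D_i)$; since $\mathcal D=\tsum_j w_j\mathcal D_j$ is a mixture (so $\tsum_j w_j=1$), the Kantorovich dual form \eqref{eq:def:1wd} yields convexity, $\mathcal W_1(\mathcal D,\mathcal D_i)=\mathcal W_1(\tsum_j w_j\mathcal D_j,\tsum_j w_j\mathcal D_i)\le\tsum_{j\ne i} w_j\,\mathcal W_1(\mathcal D_j,\mathcal D_i)$, matching the first term inside the braces.

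For the local-approximation part I would interpolate through the nearest queried HP. Letting $t_\star\in\arg\min_{t\in[T]} d(\btheta,\btheta_t^{(i)})$, I insert $\pm\tilde\loss(\btheta_{t_\star}^{(i)},\mathcal D_i)$ and $\pm\widehat\loss_i(\btheta_{t_\star}^{(i)})$ to obtain three terms: $\tilde\loss$-Lipschitzness in $\btheta$ \eqref{eq:def:theta-lip-tilde} bounds the first by $\tilde L(\mathcal D_i)\min_t d(\btheta,\btheta_t^{(i)})$; the middle term equals $\Loss_{t_\star}^{(i)}-\widehat\loss_i(\btheta_{t_\star}^{(i)})$ (using the identification $\Loss_t^{(i)}=\tilde\loss(\btheta_t^{(i)},\mathcal D_i)$ at the queried points), bounded in magnitude by $\delta_i$; and $\widehat\loss_i$-Lipschitzness \eqref{eq:def:theta-lip-hat} bounds the third by $\widehat L_i\min_t d(\btheta,\btheta_t^{(i)})$. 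Summing the two parts, weighting by $\alpha_i(\btheta)$, summing over $i$, and taking $\max_{\btheta\in\bar\bTheta}$ reproduces the right-hand side of \eqref{eq:og-bnd}; the IID specialization is immediate since $\mathcal W_1(\mathcal D_j,\mathcal D_i)=0$ when the local distributions coincide, killing the first term. The step I expect to be the main obstacle is the bookkeeping in this second split: correctly pinning down $\Loss_t^{(i)}=\tilde\loss(\btheta_t^{(i)},\mathcal D_i)$ so that the per-sample surface error enters exactly as $\delta_i$, and cleanly reducing the mixture distance via Wasserstein convexity so that only the pairwise party distances $\mathcal W_1(\mathcal D_j,\mathcal D_i)$ survive.
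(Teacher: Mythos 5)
Your proposal is correct and follows essentially the same route as the paper, which organizes the identical argument into three propositions: the telescoping/optimality decomposition yielding the factor of $2$ and the per-party split into $\tilde\beta(\btheta)\,\mathcal W_1(\mathcal D,\mathcal D_i)+\epsilon_i(\btheta,T)$ (Proposition~\ref{prop:opt-bnd}), the Kantorovich-dual mixture bound $\mathcal W_1(\mathcal D,\mathcal D_i)\le\tsum_{j\ne i}w_j\,\mathcal W_1(\mathcal D_j,\mathcal D_i)$ (Proposition~\ref{prop:bnd-wd}), and the nearest-queried-HP interpolation with the identification $\Loss_t^{(i)}=\tilde\loss(\btheta_t^{(i)},\mathcal D_i)$ giving $(\tilde L(\mathcal D_i)+\widehat L_i)\min_t d(\btheta,\btheta_t^{(i)})+\delta_i$ (Proposition~\ref{prop:bnd-eps-i}). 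The bookkeeping you flag as the main obstacle is handled in the paper exactly as you describe, so there is no gap to close.
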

We make some observations regarding the above results. Firstly, the first term in our bound characterizes the errors incurred by the differences among parties' local data distributions, i.e., the magnitude of Non-IIDness in a FL system. In particular, we can see it vanish under the IID setting.
Secondly, the last two terms measure the quality of the local HPO approximation, which can be reduced if a good loss surface is selected.
Thirdly, $\min_{t \in [T]} d(\btheta, \btheta_t^{(i)})$
indicates that the optimality gap depends only on the HP trials $\btheta_t^{(i)}$ that is closest to the optimal HP setting.
Finally, if we assume each party's training dataset $D_i$ is of size $n_i$ sampled as $D_i \sim \mathcal{D}_i^{n_i}$, we can view $w_i = \tfrac{n_i}{n}$ where $n = \sum_{i=1}^p n_i$, i.e., with probability $w_i$ the desired data distribution $\mathcal{D}$ is sampled from $\mathcal{D}_i$.


In order to obtain the result in \eqref{eq:og-bnd}, we first analyze \eqref{eq:def:og} in Proposition~\ref{prop:opt-bnd}, see its proof in Appendix~\ref{app:proofs}.
Note that the local loss surfaces $\widehat \loss_i, \ i\in[p]$ are computed at a certain test/validation set sampled from the parties local data distribution $\mathcal{D}_i$. We quantify the relationship between $\widehat \loss_i(\btheta)$ and the estimated loss function $\tilde \loss(\btheta, \mathcal{D}_i)$ as follows:
\begin{equation}
    \label{eq:def:local-loss}
    | \widehat \loss_i(\btheta) - \tilde \loss(\btheta, \mathcal{D}_i) | := \epsilon_i(\btheta, T).
\end{equation}

\begin{proposition}\label{prop:opt-bnd}
Consider $\widehat \btheta^\star$ and $\btheta^\star$ are two sets of HP defined in \eqref{eq:def:flora-thetas} and \eqref{eq:def:global-theta}, respectively, and $\{\mathcal{D}_i\}_{i\in [p]}$ and $\mathcal{D}$ are the sets of parties' local data distributions and the target (global) data distribution we want to learn, for a given HP space such that $\widehat \btheta^\star, \btheta^\star \in \bar\bTheta \subset \bTheta$, we have
\begin{align}
    &\tilde \loss(\widehat \btheta^\star, \mathcal{D}) - \tilde \loss(\btheta^\star, \mathcal{D}) 
    \label{eq:def:opt-bnd}
    \le 2 \max_{\btheta \in \bar\bTheta}\tsum_{i\in [p]} \alpha_i(\btheta) \left\{\tilde \beta(\btheta) \mathcal W_1(\mathcal{D}, \mathcal{D}_i) +  \epsilon_i(\btheta, T)\right\}.
\end{align}
\end{proposition}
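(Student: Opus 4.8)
The plan is to derive a single pointwise inequality that sandwiches the unified surface $\widehat\loss$ against the target estimated loss $\tilde\loss(\cdot,\mathcal{D})$ at every $\btheta \in \bar\bTheta$, and then feed that inequality into the textbook ``add-and-subtract the surrogate'' optimality argument. The engine of the first step is the convexity constraint $\tsum_i \alpha_i(\btheta) = 1$ from the definition of the unified loss surface: it lets me rewrite the target loss as $\tilde\loss(\btheta,\mathcal{D}) = \tsum_{i} \alpha_i(\btheta)\,\tilde\loss(\btheta,\mathcal{D})$ and then interpolate through the per-party estimated losses $\tilde\loss(\btheta,\mathcal{D}_i)$.

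First I would establish the key pointwise bound. Writing $\tilde\loss(\btheta,\mathcal{D}) - \widehat\loss(\btheta) = \tsum_i \alpha_i(\btheta)\big[\tilde\loss(\btheta,\mathcal{D}) - \tilde\loss(\btheta,\mathcal{D}_i)\big] + \tsum_i \alpha_i(\btheta)\big[\tilde\loss(\btheta,\mathcal{D}_i) - \widehat\loss_i(\btheta)\big]$, the triangle inequality splits the gap into two families of terms. The first is controlled by the distribution-Lipschitz assumption \eqref{eq:def:dist-lip-hat}, giving $|\tilde\loss(\btheta,\mathcal{D}) - \tilde\loss(\btheta,\mathcal{D}_i)| \le \tilde\beta(\btheta)\,\mathcal{W}_1(\mathcal{D},\mathcal{D}_i)$; the second is exactly the per-party approximation error $\epsilon_i(\btheta,T)$ defined in \eqref{eq:def:local-loss}. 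Collecting these yields $|\tilde\loss(\btheta,\mathcal{D}) - \widehat\loss(\btheta)| \le B(\btheta)$ for every $\btheta \in \bar\bTheta$, where $B(\btheta) := \tsum_i \alpha_i(\btheta)\big[\tilde\beta(\btheta)\,\mathcal{W}_1(\mathcal{D},\mathcal{D}_i) + \epsilon_i(\btheta,T)\big]$ is exactly the bracketed quantity whose doubled maximum over $\bar\bTheta$ constitutes the claimed bound \eqref{eq:def:opt-bnd}.

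Next I would run the optimality decomposition. Splitting $\tilde\loss(\widehat\btheta^\star,\mathcal{D}) - \tilde\loss(\btheta^\star,\mathcal{D})$ as $[\tilde\loss(\widehat\btheta^\star,\mathcal{D}) - \widehat\loss(\widehat\btheta^\star)] + [\widehat\loss(\widehat\btheta^\star) - \widehat\loss(\btheta^\star)] + [\widehat\loss(\btheta^\star) - \tilde\loss(\btheta^\star,\mathcal{D})]$, the middle bracket is nonpositive because $\widehat\btheta^\star$ minimizes $\widehat\loss$ by its definition in \eqref{eq:def:flora-thetas}. The two outer brackets are each at most $B(\widehat\btheta^\star)$ and $B(\btheta^\star)$ respectively by the pointwise bound. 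Since both $\widehat\btheta^\star$ and $\btheta^\star$ lie in $\bar\bTheta$ by hypothesis, each of $B(\widehat\btheta^\star)$ and $B(\btheta^\star)$ is at most $\max_{\btheta \in \bar\bTheta} B(\btheta)$, and summing the two contributions produces the factor of $2$ in \eqref{eq:def:opt-bnd}.

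I expect no deep obstacle here: the argument is essentially a Lipschitz/surrogate-optimality sandwich, and the only genuine idea is to exploit $\tsum_i \alpha_i(\btheta)=1$ so that the per-party distributional discrepancies enter additively. The main points requiring care are bookkeeping rather than mathematics: (i) keeping the weights $\alpha_i(\btheta)$ inside the absolute values when applying the triangle inequality, so the $\alpha_i$ correctly reweight the Wasserstein and $\epsilon_i$ contributions; and (ii) confirming that the Lipschitz and approximation-error hypotheses are invoked only at points of $\bar\bTheta$, which is precisely why the statement restricts $\widehat\btheta^\star, \btheta^\star \in \bar\bTheta$ and why the final maximum is taken over $\bar\bTheta$ rather than over all of $\bTheta$.
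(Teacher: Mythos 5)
Your proposal is correct and matches the paper's own proof essentially verbatim: the same three-term add-and-subtract decomposition with the middle term $\widehat\loss(\widehat\btheta^\star) - \widehat\loss(\btheta^\star) \le 0$, and the same pointwise sandwich using $\tsum_i \alpha_i(\btheta) = 1$, assumption \eqref{eq:def:dist-lip-hat}, and definition \eqref{eq:def:local-loss} — you merely present the two steps in the opposite order. Your intermediate bound $B(\widehat\btheta^\star) + B(\btheta^\star)$ is in fact marginally tighter than the stated $2\max_{\btheta\in\bar\bTheta} B(\btheta)$ before you relax it, which is a fine observation but not a different argument.
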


We now dive into each term in \eqref{eq:def:opt-bnd} to provide tight bounds for $\mathcal W_1(\mathcal{D}, \mathcal{D}_i)$ and $\epsilon_i(\btheta, T)$ in the following propositions. All the proofs can be found in Appendix~\ref{app:proofs}.

\begin{proposition}\label{prop:bnd-wd}
Consider 1-Wasserstein distance we defined in \eqref{eq:def:1wd}, for a local data distribution $\mathcal{D}_i$ of any party $i, \ i\in [p]$, and $ \mathcal{D} = \sum_{i=1}^p w_i \mathcal{D}_i$ for some $w_i\in [0,1], \forall i\in [p]$, we have
\begin{equation}\label{eq:bnd-wd}
    \mathcal W_1(\mathcal{D}, \mathcal{D}_i) \leq \tsum_{j \in [p], j \not= i} w_j \mathcal W_1 (\mathcal{D}_j, \mathcal{D}_i). 
\end{equation}
In particular, when $\mathcal{D}_i, \ i\in [p]$ are i.i.d. data distribution, i.e., all parties in a federated learning system possess i.i.d. local data distribution -- that is, $\mathcal W_1(\mathcal{D}_j, \mathcal{D}_i) = 0 \forall i,j \in [p]$ -- then  $\tsum_{j\in [p], j \not= i} w_j \mathcal W_1 (\mathcal{D}_j, \mathcal{D}_i) = 0$. Therefore, $\mathcal W_1(\mathcal{D}, \mathcal{D}_i) = 0, \ \forall i\in [p]$.
\end{proposition}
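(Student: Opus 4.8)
The plan is to work directly from the Kantorovich--Rubinstein dual representation of the $1$-Wasserstein distance given in \eqref{eq:def:1wd}, since the target distribution $\mathcal{D} = \tsum_{j=1}^p w_j \mathcal{D}_j$ is a finite mixture and $\mathcal{W}_1$ interacts cleanly with mixtures through the linearity of expectation. First I would fix an arbitrary test function $f \in \mathsf{F}_1$ (continuous with Lipschitz constant at most $1$) and write
\begin{equation*}
\E_{x \sim \mathcal{D}} f(x) - \E_{x \sim \mathcal{D}_i} f(x)
= \tsum_{j=1}^p w_j \,\E_{x \sim \mathcal{D}_j} f(x) - \E_{x \sim \mathcal{D}_i} f(x),
\end{equation*}
using only that the expectation under a mixture is the corresponding convex combination of the per-component expectations.

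The key step is to homogenize the second term. Here I would use the standing assumption that the weights form a convex combination, i.e.\ $\tsum_{j=1}^p w_j = 1$ (consistent with the reading $w_i = \nicefrac{n_i}{n}$ stated after Theorem~\ref{thm:og-bnd}), so that $\E_{x\sim\mathcal{D}_i} f(x) = \tsum_{j=1}^p w_j \,\E_{x\sim\mathcal{D}_i} f(x)$. Substituting this in and grouping the two sums gives
\begin{equation*}
\E_{x \sim \mathcal{D}} f(x) - \E_{x \sim \mathcal{D}_i} f(x)
= \tsum_{j=1}^p w_j \left( \E_{x \sim \mathcal{D}_j} f(x) - \E_{x \sim \mathcal{D}_i} f(x) \right)
= \tsum_{j \in [p], j \neq i} w_j \left( \E_{x \sim \mathcal{D}_j} f(x) - \E_{x \sim \mathcal{D}_i} f(x) \right),
\end{equation*}
where the last equality drops the $j=i$ summand because its bracket is identically zero.

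It then remains to take the supremum over $f \in \mathsf{F}_1$. Because each weight satisfies $w_j \ge 0$, the supremum of the weighted sum is bounded above by the weighted sum of the individual suprema (subadditivity/monotonicity of $\sup$), and each individual supremum $\sup_{f \in \mathsf{F}_1}\big(\E_{\mathcal{D}_j} f - \E_{\mathcal{D}_i} f\big)$ is exactly $\mathcal{W}_1(\mathcal{D}_j, \mathcal{D}_i)$ by definition. This yields the claimed bound \eqref{eq:bnd-wd}. The i.i.d.\ specialization is then immediate: if $\mathcal{W}_1(\mathcal{D}_j,\mathcal{D}_i)=0$ for all $i,j$, the right-hand side is a sum of zero terms, forcing $\mathcal{W}_1(\mathcal{D},\mathcal{D}_i)=0$. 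The only genuinely load-bearing points --- and thus where I would be careful rather than where I expect real difficulty --- are (i) that $\tsum_j w_j = 1$, which is what makes the homogenization and the cancellation of the diagonal term valid, and (ii) the nonnegativity of the $w_j$, which is what licenses pulling the supremum inside past the weights; without either, the inequality could fail or reverse. Everything else is the routine linearity of expectation over a finite mixture.
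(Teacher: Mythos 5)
Your proof is correct and follows essentially the same route as the paper's: the Kantorovich--Rubinstein dual form, linearity of expectation over the mixture, cancellation of the diagonal term, and subadditivity of the supremum over $\mathsf{F}_1$. The only difference is that you make explicit the normalization $\tsum_{j} w_j = 1$ (needed to absorb $\E_{\mathcal{D}_i} f$ into the sum), which the paper's third equality uses implicitly --- a worthwhile clarification, since the proposition's statement only asserts $w_j \in [0,1]$.
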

\begin{proposition}\label{prop:bnd-eps-i}
For any party $i, \ i\in[p]$, consider a (HP, loss) pair $(\btheta_t^{(i)}, \Loss_t^{(i)})$ collected during the local HPO run for party $i$, for any $\btheta \in \bar{\bTheta} \subset \bTheta$, we have
\begin{equation}\label{eq:bnd-eps-i}
\epsilon_i(\btheta, T)\leq 
   \left( \tilde L(\mathcal{D}_i) + \widehat L_i \right)
   \min_{t \in [T]} d(\btheta, \btheta_t^{(i)})
  + \delta_i,
\end{equation}
where $\delta_i = \max_t | \Loss_t^{(i)} - \widehat \loss_i (\btheta_t^{(i)}) |$
is the maximum per sample training error for the local loss surface $\widehat \loss_i$.
\end{proposition}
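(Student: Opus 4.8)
The plan is to fix an arbitrary trial index $t \in [T]$, prove the bound with $d(\btheta, \btheta_t^{(i)})$ in place of the minimum, and only minimize over $t$ at the very end — this is legitimate because the left-hand quantity $\epsilon_i(\btheta, T) = |\widehat\loss_i(\btheta) - \tilde\loss(\btheta, \mathcal D_i)|$ (as defined in \eqref{eq:def:local-loss}) does not depend on $t$. The crucial starting observation is that each collected value $\Loss_t^{(i)} = \Loss(\alg(\mclass, \btheta_t^{(i)}, D_i), D_i')$ is, by the very definition of the estimated loss $\tilde\loss$ and the fact that $D_i'$ is a holdout set drawn from $\mathcal D_i$, exactly the estimated local loss $\tilde\loss(\btheta_t^{(i)}, \mathcal D_i)$ at the tried HP. This identification is what lets the trial points act as anchors connecting the surrogate surface $\widehat\loss_i$ (built from the local $(\text{HP},\text{loss})$ pairs) to the estimated loss $\tilde\loss(\cdot, \mathcal D_i)$.

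With this in hand, I would insert the anchor $\btheta_t^{(i)}$ by the triangle inequality,
\begin{align*}
|\widehat\loss_i(\btheta) - \tilde\loss(\btheta, \mathcal D_i)|
&\le |\widehat\loss_i(\btheta) - \widehat\loss_i(\btheta_t^{(i)})|
+ |\widehat\loss_i(\btheta_t^{(i)}) - \tilde\loss(\btheta_t^{(i)}, \mathcal D_i)|
+ |\tilde\loss(\btheta_t^{(i)}, \mathcal D_i) - \tilde\loss(\btheta, \mathcal D_i)|,
\end{align*}
and bound the three pieces in turn. The first term is controlled by the Lipschitzness of the per-party surrogate \eqref{eq:def:theta-lip-hat}, giving $\widehat L_i \cdot d(\btheta, \btheta_t^{(i)})$. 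The third term is controlled by the Lipschitzness of $\tilde\loss$ in its HP argument \eqref{eq:def:theta-lip-tilde} at the fixed distribution $\mathcal D_i$, giving $\tilde L(\mathcal D_i) \cdot d(\btheta, \btheta_t^{(i)})$. The middle term, via the identification $\tilde\loss(\btheta_t^{(i)}, \mathcal D_i) = \Loss_t^{(i)}$, equals $|\widehat\loss_i(\btheta_t^{(i)}) - \Loss_t^{(i)}|$, which is bounded by the maximum per-sample training error $\delta_i$ of the surrogate by its definition.

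Summing the three bounds yields, for every $t \in [T]$,
\begin{equation*}
|\widehat\loss_i(\btheta) - \tilde\loss(\btheta, \mathcal D_i)|
\le \left( \tilde L(\mathcal D_i) + \widehat L_i \right) d(\btheta, \btheta_t^{(i)}) + \delta_i.
\end{equation*}
Since the left side is independent of $t$ while the right side holds for each $t$, taking the minimum over $t$ on the right produces the claimed \eqref{eq:bnd-eps-i}.

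The calculation itself is a short three-term triangle inequality, so the main obstacle is conceptual rather than computational: I must justify the middle step, namely that the loss $\Loss_t^{(i)}$ reported by the local HPO genuinely coincides with the estimated loss $\tilde\loss(\btheta_t^{(i)}, \mathcal D_i)$, so that the surrogate's fit error at its training points (captured by $\delta_i$), rather than some uncontrolled discrepancy, governs that term. A secondary subtlety to handle carefully is that all three pieces must use the same anchor $\btheta_t^{(i)}$ and that both Lipschitz inequalities require $\btheta, \btheta_t^{(i)} \in \bar\bTheta$; I would note explicitly that $\bar\bTheta$ is taken to contain both $\btheta$ and the trial points $\btheta_t^{(i)}$, consistent with the remark following the Lipschitzness assumption.
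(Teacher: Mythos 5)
Your proof is correct and follows essentially the same route as the paper's: the identical three-term triangle inequality through the anchor $\btheta_t^{(i)}$, the same identification $\Loss_t^{(i)} = \tilde \loss(\btheta_t^{(i)}, \mathcal{D}_i)$ bounding the modeling error by $\delta_i$, the same two Lipschitz bounds, and the minimization over $t$ at the end. If anything, you state more explicitly than the paper the two points it leaves implicit --- that the left-hand side is $t$-independent so one may minimize over $t$, and that $\bar\bTheta$ must contain both $\btheta$ and the trial points for the Lipschitz assumptions to apply.
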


Note that if we use non-parametric regression models as the loss surfaces (such as Gaussian Processes, Random Forests, etc), the per-sample training error can be made arbitrarily small (that is $\delta_i \approx 0$), but at the cost of increasing $\widehat L_i$ for $\widehat \loss_i$.

\section{Empirical evaluation} \label{sec:emp}

\begin{table*}[t]
\centering
\caption{Comparison of different loss surfaces (the 4 rightmost columns)  for \flora relative to the  baseline for single-shot 3-party \flhpo in terms of the {\em relative regret} (lower is better). 
}
{\scriptsize
\begin{tabular}{lcccccc}
\toprule
Aggregate & ML Method & SGM & SGM+U & MPLM & APLM \\
\midrule
Regret &  HGB  &   [0.30, 0.47, 0.68]  &   [0.27, 0.54, 0.64]  &   [0.25, 0.43, 0.67]  &   [0.25, 0.50, 0.65] \\
Inter-quartile range &  SVM  &   [0.04, 0.38, 1.11]  &   [0.04, 0.48, 1.07]  &   [0.38, 0.91, 2.41]  &   [0.23, 0.54, 0.76] \\
&  MLP  &   [0.36, 0.80, 0.97]  &   [0.48, 0.99, 1.01]  &   [0.47, 0.89, 1.00]  &   [0.46, 0.79, 0.95] \\
& Overall & [{\bf 0.22}, {\bf 0.53}, 0.97] & [0.32, 0.55, 1.01] & [0.36, 0.61, 0.99] & [0.36, 0.57, {\bf 0.79}] \\
\midrule
\flora & HGB &  6/0/1  &   6/0/1  &   7/0/0  &   7/0/0 \\
Wins/Ties/Losses & SVM &  4/0/2  &   4/0/2  &   3/0/3  &   5/0/1 \\
& MLP &   6/0/1  &   4/1/2  &   5/1/1  &   6/0/1 \\
& Overall & 16/0/4 & 14/1/5 & 15/1/4 & {\bf 18/0/2} \\
\midrule
Wilcoxon Signed-Rank Test &  HGB & (26, 0.02126)  &   (27, 0.01400)  &   (28, 0.00898)  &   (28, 0.00898) \\
1-sided & SVM  &   (18, 0.05793)  &   (17, 0.08648)  &   (9, 0.62342)  &   (15, 0.17272) \\
(statistic, p-value) & MLP  &  (21, 0.11836)  &   (15, 0.17272)  &   (18, 0.05793)  &   (24, 0.04548) \\
& Overall & (174, 0.00499) & (164, 0.00272) & (141, 0.03206) & {\bf (183.5, 0.00169)}
 \\
\bottomrule
\end{tabular}
%
}
\label{tab:baseline-comp}
\end{table*}

In this section, we evaluate our proposed scheme \flora with different loss surfaces for the \flhpo on a variety of ML models -- histograms based gradient boosted (HGB) decision trees~\citep{friedman2001greedy}, Support Vector Machines (SVM) with RBF kernel and multi-layered perceptrons (MLP) (using their respective {\tt scikit-learn} implementation~\citep{scikit-learn}) on OpenML~\citep{OpenML2013} classification problems. 
The precise HP search space is described in Appendix~\ref{asec:emp:search-space}. 
First, we fix the number of parties $p = 3$ and compare \flora to a baseline on $7$ datasets. Then we study the effect of increasing the number of parties from $p = 3$ up to $p = 100$ on the performance of our proposed scheme on $3$ datasets. The data is randomly split across parties.
We also evaluate \flora with different parameter choices, in particular, the number of local HPO rounds and the communication overhead in the aggregation of the per-party (HP, loss) pairs.
Finally, we evaluate \flora in a real FL testbed IBM FL~\citep{ludwig2020ibm} using its default HP setting as a baseline.
More experimental results can be found in Appendix~\ref{app:exp}.
%
\paragraph{Single-shot baseline.}
To appropriately evaluate our proposed single-shot \flhpo scheme, we need to select a meaningful single-shot baseline. For this, we choose the default HP configuration of {\tt scikit-learn} 
as the single-shot baseline for two main reasons: (i)~the default HP configuration in {\tt scikit-learn} is set manually based on expert prior knowledge and extensive empirical evaluation, and 
(ii)~these are also used as the defaults in the 
Auto-Sklearn package~\citep{feurer2015efficient,feurer-arxiv20a}, one of the leading open-source AutoML python packages, which maintains a carefully selected portfolio of default configurations.
%
\paragraph{Dataset selection.}
For our evaluation of single-shot HPO, we consider 
$7$ binary classification datasets of varying sizes and characteristics from OpenML~\citep{OpenML2013} such that there is at least a significant room for improvement over the single-shot baseline performance. We consider datasets which have at least $> 3\%$ potential improvement in balanced accuracy for gradient boosted decision trees. See Appendix~\ref{asec:emp:datasets} for details on data.
Note that this only ensures room for improvement for HGB, while highlighting cases with no room for improvement for SVM and MLP as we see in our results.
\paragraph{(Dis-)Regarding other baselines.}
While there are some existing schemes for \flhpo (as discussed in \S \ref{sec:related}), we are unable to compare \flora to them for the following reasons: (i)~As noted by \citet[Section 1, Related Work]{khodak2021federated}, existing schemes focus ``on a small number of
hyperparameters (e.g. the step-size and sometimes one or two more) in less general settings (studying small-scale problems or assuming server-side validation data)'' whereas we explicitly assume no access to such a ``server-side validation data''. (ii)~Furthermore, we noted in \S \ref{sec:prob-setup} (\cond{1}), we do not assume any ``weight-sharing'' type capability, and hence it is not clear how FedEx~\citep{khodak2021federated} can be applied to \flhpo in the general\footnote{Moreover, \citep{khodak2021federated} claim that FedEx can handle architectural hyper-paramters but it is never demonstrated and discussed explicitly. In contrast, our proposed algorithm can handle architectural hyperparameters (as we do with HGB (tree depth) and MLP (width of the layer)).}. 
\paragraph{Implementation.}
We consider two implementations for our empirical evaluation. In our first three sets of experiments, we emulate the final FL (Algorithm~\ref{alg:fl-hpo-lsa}, line 8) with a centralized training using the pooled data. We chose this implementation because we want to evaluate the final performance of any HP configuration (baseline or recommended by \flora) in a statistically robust manner with multiple train/validation splits (for example, via 10-fold cross-validation) instead of evaluating the performance on a single train/validation. This form of evaluation is extremely expensive to perform in a real FL system and generally not feasible, but 
allows us to evaluate how the performance of our single-shot HP recommendation fairs against that of the best-possible HP found via a full-scale centralized HPO.
\paragraph{Evaluation metric.}
In all datasets, we consider the balanced accuracy as the metric we wish to maximize. For the local per-party HPOs (as well as the centralized HPO we execute to compute the regret), we maximize the 10-fold cross-validated balanced accuracy. For Table~\ref{tab:baseline-comp}-\ref{tab:num-parties-comp}, we report the {\em relative regret}, computed as $\nicefrac{(a^\star - a)}{(a^\star - b)}$, where $a^\star$ is the best metric obtained via the centralized HPO, $b$ is the result of the baseline, and $a$ is the result of the HP recommended by \flora. The baseline has a relative regret of 1 and smaller values imply better performance. A value larger than 1 implies that the recommended HP performs worse than the baseline. 
%
\paragraph{Comparison to single-shot baseline.}
In our first set of experiments for 3-party \flhpo ($p = 3$), we compare our proposed scheme with the baseline across different datasets, machine learning models and \flora loss surfaces. The aggregated results are presented in Table~\ref{tab:baseline-comp}, with the individual results detailed in Appendix~\ref{asec:baseline-comp}. For each of the three methods, we report the aggregate performance over all considered datasets in terms of (i)~inter-quartile range, (ii)~Wins/Ties/Losses of \flora w.r.t. the single-shot baseline, and (iii)~a one-sided Wilcoxon Signed Ranked Test of statistical significance with the null hypothesis that the median of the difference between the single-shot baseline and \flora is positive against the alternative that the difference is negative (implying \flora improves over the baseline). Finally, we also report an ``Overall'' performance, further aggregated across all ML models.

All \flora loss surfaces show strong performance w.r.t the single-shot baseline, with significantly more wins than losses, and 3rd-quartile relative regret values less than 1 (indicating improvement over the baseline). All \flora loss surfaces have a p-value of less than $0.05$, indicating that we can reject the null hypothesis. 
Overall, APLM shows the best performance over all loss surfaces, both in terms of Wins/Ties/Losses over the baseline as well as in terms of the Wilcoxon Signed Rank Test, with the highest statistic and a p-value close to $10^{-3}$. APLM also has significantly lower 3rd-quartile than all other loss surfaces. MPLM appears to have the worst performance but much of that is attributable to a couple of very hard cases with SVM (see Appendix~\ref{asec:baseline-comp} for detailed discussion). Otherwise, MPLM performs second best both for \flhpo with HGB and MLP.
\begin{table}[t]
\centering
\caption{Effect of increasing the number of parties on \flora with different loss surfaces for HGB. 
}
{\footnotesize
\begin{tabular}{lcccccc}
\toprule
Data                     & $p$      &$\gamma_p$& SGM & SGM+U & MPLM & APLM \\
\midrule
EEG Eye State            & 3        &  1.01 & 0.14   & 0.12   & 0.11   & 0.12 \\
14980 samples            & 6        &  1.01 & 0.07   & 0.00   & 0.07   & 0.09 \\
                         & 10       &  1.03 & 0.08   & 0.00   & 0.16   & 0.01 \\
                         & 25       &  1.08 & 0.35   & 0.92   & 0.17   & 0.04 \\
                         & 50       &  1.20 & 0.20   & 0.23   & 0.67   & 0.12 \\

\midrule
Electricity               & 3       &  1.01 & 0.17   & 0.14   & 0.09   & 0.12 \\
45312 samples             & 6       &  1.01 & 0.25   & 0.21   & 0.18   & 0.13 \\
                          & 10      &  1.02 & 0.03   & 0.06   & 0.32   & 0.14 \\
                          & 25      &  1.04 & 0.40   & 0.42   & 1.42   & 0.89 \\
                          & 50      &  1.07 & 1.57   & 1.57   & 0.89   & 1.13 \\
                          & 100     &  1.14 & 1.45   & 1.47   & 0.48   & 1.11 \\
\midrule
Pollen                    & 3       &  1.02 & 0.43   & 0.54   &  0.43  & 0.69 \\
3848 samples              & 6       &  1.10 & 1.02   & 0.91   &  0.54  & 0.56 \\
                          & 10      &  1.16 & 1.05   & 0.73   &  0.75  & 1.12 \\
\bottomrule
\end{tabular}
}
\label{tab:num-parties-comp}
\end{table}
%
\paragraph{Effect of increasing number of parties.}
In the second set of experiments, we study the effect of increasing the number of parties in the \flhpo problem on 3 datasets and HGB. For each data set, we increase the number of parties $p$ up until each party has at least 100 training samples. We present the relative regrets in Table~\ref{tab:num-parties-comp}. It also displays
$\gamma_p := \nicefrac{\left(1 - \min_{i \in [p]} \Loss_\star^{(i)} \right) }{ \left(1 - \max_{i \in [p]} \Loss_\star^{(i)} \right)}$, where $\Loss_\star^{(i)} = \min_{t \in [T]} \Loss_t^{(i)}$ is the minimum loss observed during the local asynchronous HPO at party $i$. This ratio $\gamma_p$ is always greater than 1, and highlights the difference in the observed performances across the parties. A ratio closer to 1 indicates that all the parties have relatively similar performances on their respective training data, while a ratio much higher than 1 indicating significant discrepancy between the per-party performances, implicitly indicating the difference in the per-party data distributions. 

We notice that increasing the number of parties does not have a significant effect on $\gamma_p$ for the Electricity dataset until $p=100$, but significantly increases for the Pollen dataset earlier (making the problem harder). For the EEG eye state, the increase in $\gamma_p$ with increasing $p$ is moderate until $p=50$. 
The results indicate that, with low or moderate increase in $\gamma_p$ (EEG eye state, Electricity for moderate $p$), the proposed scheme is able to achieve low relative regret -- the increase in the number of parties does not directly imply degradation in performance. However, with significant increase in $\gamma_p$ (Pollen, Electricity with $p=50, 100$ and EEG Eye State with $p=50$), we see a significant increase in the relative regret (eventually going over 1 in a few cases). 
In this challenging case, MPLM (the most pessimistic loss function) has the most graceful degradation in relative regret compared to the remaining loss surfaces.
%
%
\begin{figure}
\centering
\begin{subfigure}{0.45\textwidth}
\includegraphics[width=\textwidth]{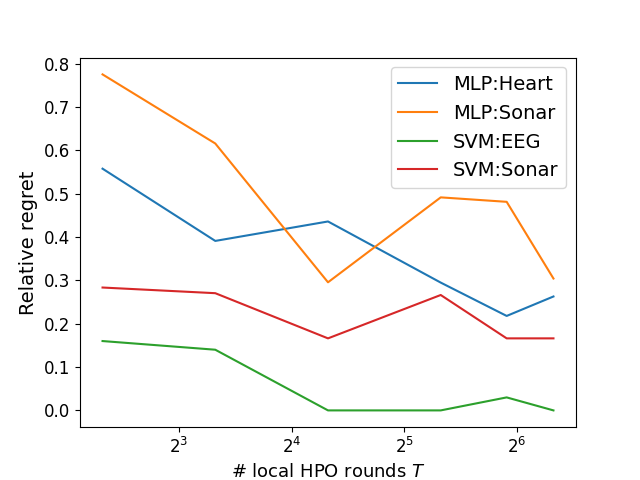}
\caption{\# local HPO rounds.}
\label{fig:aplm-T-dep}
\end{subfigure}
~
\begin{subfigure}{0.45\textwidth}
\includegraphics[width=\textwidth]{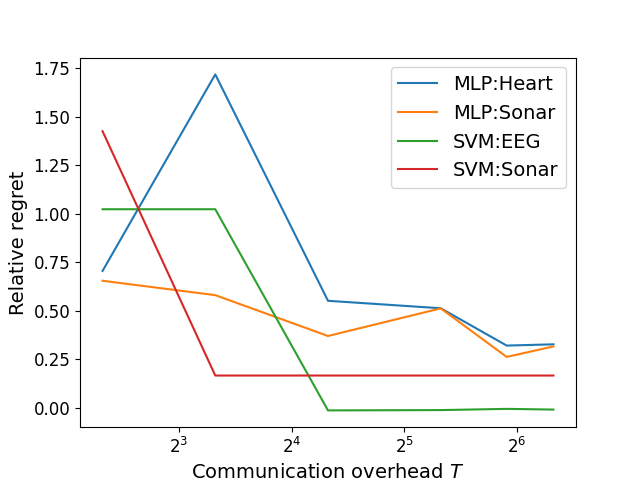}
\caption{\# (HP, loss) pairs communicated to aggregator}
\label{fig:aplm-comm}
\end{subfigure}
\caption{Effect of different choices on \flora with the APLM loss surface for different methods and datasets. More results and other loss surfaces are presented in Appendix~\ref{asec:emp:Tdep} and \ref{asec:emp:Tcommdep}.}
\label{fig:aplm-ablation}
\end{figure}
%
\paragraph{Effect of different choices in \flora.}
In this set of experiments, we consider \flora with the APLM loss surface, and ablate the effect of different choices in \flora on 2 datasets each for SVM and MLP. First, we study the impact of the thoroughness of the per-party local HPOs, quantified by the number of HPO rounds $T$ in Figure~\ref{fig:aplm-T-dep}. The results indicate that for really small $T$ ($<20$) the relative regret of \flora can be very high. However, after that point, the relative regret converges to its best possible value. We present the results for other loss surfaces in Appendix~\ref{asec:emp:Tdep}.

We also study the effect of the communication overhead of \flora for fixed level of local HPO thoroughness. We assume that each party performs $T=100$ rounds of local asynchronous HPO. However, instead of sending all $T$ (HP, loss) pairs, we consider sending $T' < T$ of the ``best'' (HP, loss) pairs -- that is, (HP, loss) pairs with the $T'$ lowest losses. Changing the value of $T'$ trades off the communication overhead of the \flora step where the aggregators collect the per-party loss pairs (Algorithm~\ref{alg:fl-hpo-lsa}, line 5). The results for this study are presented in Figure~\ref{fig:aplm-comm}, and indicate that, for really small $T'$, the relative regret can be really high. However, for a moderately high value of $T' < T$, \flora converges to its best possible performance. Results on other loss surfaces and further discussion can be found in Appendix~\ref{asec:emp:Tcommdep}.
\begin{table*}[t]
\centering
\caption{Performance of \flora with the IBM-FL system in terms of the {\em balanced accuracy} on a holdout test set (higher is better). The baseline is still the default HP configuration of {\tt HistGradientBoostingClassifier} in {\tt scikit-learn}.}
{\footnotesize
\begin{tabular}{lccccccc}
\toprule
Data          & \# parties & \# training data per party & Baseline & SGM & SGM+U & MPLM & APLM  \\
\midrule
Oil spill       &  $3$   &$200$   & 0.5895 & \textbf{0.7374} & 0.5909 & 0.7061 & 0.7332 \\
\midrule
EEG eye state &  $3$ & $3,000$  & 0.8864 & 0.9153 & 0.9211 & \textbf{0.9251} & 0.9245 \\
              
\midrule
Electricity   &  $6$  & $4,000$     & 0.8448 & 0.8562 & \textbf{0.8627} & 0.8621 & 0.8624 \\
             
\bottomrule
\end{tabular}
}
\label{tab:fl-summary}
\end{table*}
\paragraph{Federated Learning testbed evaluation.}
We now conduct experiments for histrogram boosted tree model in a FL testbed, utilizing IBM FL library~\citep{ludwig2020ibm,ong2020adaptive},
More specifically, we reserved $40\%$ of oil spill and electricity and $20\%$ of EEG eye state as global hold-out set only for evaluating the final FL model performance. Each party randomly sampled from the rest of the original dataset to obtain their own training dataset. 
We use the same HP search space as in Appendix~\ref{asec:emp:search-space}. 
We report the balanced accuracy of any HP (baseline or recommended by \flora) on a single train/test split. Given balanced accuracy as the evaluation metric, we utilize (1 - balanced accuracy) as the loss $\Loss_t^{(i)}$ in Algorithm~\ref{alg:fl-hpo-lsa}
Each party will run HPO to generate $T = 500$ (HP, loss) pairs and use those pairs to generate loss surface either collaboratively or by their own according to different aggregation procedures described in \S \ref{sec:method:lsa}.
Once the loss surface is generated, the aggregator uses Hyperopt~\citep{bergstra2011algorithms} to select the best HP candidate and train a federated XGBoost model via the IBM FL library using the selected HPs.
Table~\ref{tab:fl-summary}
summarizes the experimental results for $3$ datasets, indicating that \flora can significantly improve over the baseline in IBM FL testbed.

\section{Conclusions} \label{sec:next-steps}
How to effectively select hyper-parameters in FL settings is a challenging problem. In this paper, we introduced \flora, a single-shot \flhpo algorithm that can be applied to a variety of ML models. We provided a theoretical analysis which includes a bound on the optimality gap incurred by the hyper-parameter selection performed by \flora.  Our experimental evaluation shows that \flora can effectively produce hyper-paramater configurations that outperform the baseline with just a single shot.







\bibliography{reference}
\bibliographystyle{plainnat}

\newpage
\appendix
\onecolumn
\section{Technical Definitions}

\subsection{Distance in $\bTheta$}\label{app:def-dis-theta-space}

Here we will define a distance metric $d: \bTheta \times \bTheta \to \R_+$. Assuming we have $m$ HPs, if $\bTheta \subset \R^m$, then there are various distances available such as $\| \btheta - \btheta' \|_\rho$ (the $\rho$-norm). The more general case is where we have $R$ continuous/real HPs, $I$ integer HPs, and $C$ categorical HPs; $m = R + I + C$. In that case, $\bTheta \subset \R^{R} \times \Z^{I} \times \C^{C}$, and any $\btheta = (\btheta_{\R}, \btheta_{\Z}, \btheta_{\C}) \in \bTheta_{\R} \times \bTheta_{\Z} \times \bTheta_{\C} $, where $\btheta_{\R} \in \bTheta_{\R}, \btheta_{\Z} \in \bTheta_{\Z}, \btheta_{\C} \in \bTheta_{\C}$ respectively denote the continuous, integer and categorical HPs in $\btheta$. Distances over $\R^{R} \times \Z^{I}$ is available, such as $\rho$-norm. Let $d_{\R,\Z} : (\bTheta^{\R} \times \bTheta_{\Z}) \times (\bTheta_{\R} \times \bTheta_{\Z}) \to \R_+$ be some such distance.

To define distances over categorical spaces, there are some techniques such as one described by \citet{oh2019combinatorial}:

Assume that each of the $C$ HPs $\btheta_{\C,k}, k \in [C]$ have $n_k$ categories $\{\xi_{k1}, \xi_{k2}, \ldots, \xi_{kn_k} \} $. Then we define a complete undirected graph $G_k = (V_k, E_k), k \in [C]$ where 
\begin{itemize}
    \item There is a node $N_{kj}$ in $G_k$ for each category $\xi_{kj}$ for each $j \in [n_k]$ and $V_k = \{N_{k1}, \ldots N_{k n_k}\}$.
    \item There is an undirected edge $(N_{kj}, N_{kj'})$ for each pair $j, j' \in [n_k]$, and $E_k = \{ (N_{kj}, N_{kj'}), j, j' \in [n_k] \} $.
\end{itemize}

Given the per-categorical HP graph $G_k, k \in [C]$, we define the graph Cartesian product $\mathsf G = \bigotimes_{k \in [C]} G_k $ and $\mathsf G = (\mathsf V, \mathsf E)$ such that
\begin{itemize}
    \item $\mathsf V = \{ \mathsf N_{(j_1, j_2, \ldots, j_C)}: (\xi_{1 j_1}, \xi_{2 j_2}, \ldots \xi_{k j_k}, \ldots, \xi_{C j_{C}}) \in \bTheta_{\C}, j_k \in [n_k] \forall k \in [C] \}$.
    \item $\mathsf E = \{ (\mathsf N_{(j_1, j_2, \ldots, j_C)}, \mathsf N_{(j'_1, j'_2, \ldots, j'_C)}): \text{\sf IFF} \exists t \in [C] \text{ such that } \forall k \not= t, \xi_{k j_k} = \xi_{k j'_k}, \text{ and } \exists (N_{t j_t}, N_{t j'_t}) \in E_t \}$.
\end{itemize}

Then for any $\btheta_{\C}, \btheta'_{\C} \in \bTheta_{\C}$ with corresponding nodes $\mathsf N, \mathsf N' \in \mathsf V$, \citet[Theorem 2.2.1]{oh2019combinatorial} says that the length of the shortest path between nodes $\mathsf N$ and $\mathsf N'$ in $\mathsf G$ is a distance. We can consider this distance as $d_{\C}: \bTheta_{\C} \times \bTheta_{\C} \to \R_+$. Of course, there are other ways of defining distances in the categorical space.

Then we can define a distance $d : (\bTheta_{\R} \times \bTheta_{\Z} \times \bTheta_{\C}) \times  (\bTheta_{\R} \times \bTheta_{\Z} \times \bTheta_{\C}) \to \R_+$ between two HPs $\btheta, \btheta'$ as 
\begin{equation}\label{eq:mixed-dist}
d(\btheta, \btheta') = d_{\R, \Z}((\btheta_{\R}, \btheta_{\Z}), (\btheta'_{\R}, \btheta'_{\Z})) + d_{\C}(\btheta_{\C}, \btheta'_{\C}).
\end{equation}

\begin{proposition}
Given distance metrics $d_{\R,\Z}$ and $d_{\C}$, the function $d: \bTheta \times \bTheta $ defined in \eqref{eq:mixed-dist} is a valid distance metric.
\end{proposition}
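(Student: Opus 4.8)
The plan is to verify directly that $d$ satisfies the four defining axioms of a metric --- non-negativity, the identity of indiscernibles, symmetry, and the triangle inequality --- by exploiting that $d$ is simply the coordinate-wise sum of the two given metrics $d_{\R,\Z}$ and $d_{\C}$ over the factors $\bTheta_{\R} \times \bTheta_{\Z}$ and $\bTheta_{\C}$ of the product space. Throughout I would write any $\btheta \in \bTheta$ as a pair $(a, c)$ with $a = (\btheta_{\R}, \btheta_{\Z}) \in \bTheta_{\R} \times \bTheta_{\Z}$ and $c = \btheta_{\C} \in \bTheta_{\C}$, so that $d(\btheta, \btheta') = d_{\R,\Z}(a, a') + d_{\C}(c, c')$, and reduce each axiom for $d$ to the corresponding axiom already known for the two summands.

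First, non-negativity and symmetry are immediate: each summand is non-negative because $d_{\R,\Z}$ and $d_{\C}$ are metrics, so their sum is non-negative; and each summand is symmetric in its two arguments, so their sum is too. Next, for the identity of indiscernibles, the key observation is that since both summands are non-negative, $d(\btheta, \btheta') = 0$ holds if and only if $d_{\R,\Z}(a,a') = 0$ and $d_{\C}(c,c') = 0$ simultaneously; invoking the identity-of-indiscernibles axiom for each factor, this is equivalent to $a = a'$ and $c = c'$, i.e. to $\btheta = \btheta'$, with the converse being trivial. Finally, the triangle inequality follows by adding the two component-wise inequalities: for a third point $\btheta'' = (a'', c'')$ we have $d_{\R,\Z}(a, a'') \le d_{\R,\Z}(a, a') + d_{\R,\Z}(a', a'')$ and $d_{\C}(c, c'') \le d_{\C}(c, c') + d_{\C}(c', c'')$, and summing and regrouping yields $d(\btheta, \btheta'') \le d(\btheta, \btheta') + d(\btheta', \btheta'')$.

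The step requiring the most care --- though still elementary --- is the identity of indiscernibles, since it is the only place where a genuine metric property of both factors (rather than mere non-negativity or symmetry) is used, and it hinges on the fact that a sum of non-negative terms vanishes only when every term vanishes. There is no real obstacle here; the only subtlety worth flagging is that the two summands act on disjoint coordinate blocks, so the decomposition $\btheta = (a,c)$ is well defined and the component arguments never interact.

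I would close with the remark that the argument is entirely generic: it shows that any nonnegative combination (here the $\ell^1$ combination with unit weights) of metrics on the respective factors of a product space is again a metric. Consequently the specific constructions of $d_{\R,\Z}$ (e.g. a $\rho$-norm on $\bTheta_{\R} \times \bTheta_{\Z}$) and of $d_{\C}$ (the shortest-path distance on the Cartesian-product graph $\mathsf G$ of \citet{oh2019combinatorial}) enter the proof only through their being valid metrics, so the proposition holds for any such choices.
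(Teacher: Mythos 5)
Your proof is correct: the paper in fact states this proposition without any proof, so your coordinate-wise verification of the four metric axioms (with the identity of indiscernibles resting on the fact that a sum of non-negative terms vanishes only if both summands do) is exactly the standard argument that fills the gap. Your closing remark is also accurate --- the specific constructions of $d_{\R,\Z}$ and $d_{\C}$, including the shortest-path distance on the product graph, enter only through their being metrics, which is precisely why the paper could treat the claim as routine.
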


\subsection{Continuity in the space of HPs $\bTheta$}\label{app:ct-theta-space}

In the simple case, we can assume Lipschitz continuity of estimated loss $\tilde \loss(\btheta, \mathcal D)$ and the loss surface $\widehat \loss_i(\btheta), i \in [p]$ as follows:
\begin{align}
| \loss(\btheta, \mathcal D) - \loss(\btheta', \mathcal D) | & \leq \tilde L(\mathcal D) \cdot d(\btheta, \btheta'),
\\
| \widehat \loss_i(\btheta) - \widehat \loss_i (\btheta') | & \leq \widehat L \cdot d(\btheta, \btheta').
\end{align}

For a more general handling, we can consider the notion of modulus of continuity in the form of a increasing real-valued functions $\omega : \R_+ \to \R_+$ with $\lim_{t \to 0} \omega(t) = \omega(0) = 0$. Then we can say that the estimated loss $\tilde \loss(\btheta, \mathcal D)$ and the loss surface $\loss_i (\btheta)$ admits $\tilde \omega_{\mathcal D}$ and $\widehat \omega$ as a modulus of continuity (respectively) if

\begin{align}
\label{eq:est-loss-moc}
| \loss(\btheta, \mathcal D) - \loss(\btheta', \mathcal D) | & \leq \tilde \omega_{\mathcal D}( d(\btheta, \btheta') )
\\
\label{eq:lsurf-moc}
| \widehat \loss_i(\btheta) - \widehat \loss_i (\btheta') | & \leq \widehat \omega( d(\btheta, \btheta') ).
\end{align}

If we further assume that $\tilde \omega_{\mathcal D}, \widehat \omega$ to be concave, then we can say that these functions are sublinear as follows:

\begin{align}
\tilde \omega_{\mathcal D} (t) & \leq  \tilde A_{\mathcal D} \cdot t + \tilde B_{\mathcal D},
\\
\widehat \omega (t) & \leq  \widehat A \cdot t + \widehat B.
\end{align}

These conditions give us (indirectly) something similar in spirit to the guarantees of Lipschitz continuity, but is a more rigorous way of achieving such guarantees.

\section{Proofs for optimality analysis}\label{app:proofs}
We provide detailed proofs of the propositions stated in Section~\ref{sec:og_analysis}.

\subsection{Proof of Proposition~\ref{prop:opt-bnd}}

\begin{proof}
Consider the definition of $\widehat \btheta^\star$ and $\btheta^\star$, we can obtain
\begin{align*}
    \tilde \loss(\widehat \btheta^\star, \mathcal{D}) - \tilde \loss(\btheta^\star, \mathcal{D})
    & = 
    \tilde \loss(\widehat \btheta^\star, \mathcal{D}) - \widehat\loss(\widehat\btheta^\star) + \widehat\loss(\widehat \btheta^\star) - \widehat\loss(\btheta^\star)
    + \widehat \loss(\btheta^\star) - \tilde \loss(\btheta^\star, \mathcal{D}) \\
    & \leq
    2 \max_{\btheta \in \bar\bTheta \subset \bTheta} \left |
      \tilde \loss(\btheta, \mathcal{D}) - \widehat\loss(\btheta)
    \right|,
\end{align*}
where the inequality follows from the fact that $\widehat\loss(\widehat \btheta^\star) - \widehat\loss(\btheta^\star) \le 0$.
Moreover, observe that for any $\btheta\in \bar\bTheta \subset \bTheta$, by the definition of $\widehat \loss(\btheta)$ in \eqref{eq:def:global-loss}, we have
\begin{align*}
| \tilde \loss(\btheta, \mathcal{D}) - \widehat \loss(\btheta) | 
  & = 
  \left | \tilde \loss(\btheta, \mathcal{D}) -  \tsum_{i\in [p]} \alpha_i(\btheta) \cdot \widehat \loss_i(\btheta) \right | 
  \\
  & = 
  \left|\tilde \loss(\btheta, \mathcal{D}) -  \tsum_{i\in [p]} \alpha_i(\btheta) \cdot \tilde \loss(\btheta, \mathcal{D}_i) 
  + \tsum_{i\in[p]} \alpha_i(\btheta) \cdot \tilde \loss(\btheta, \mathcal{D}_i) - \tsum_{i\in[p]} \alpha_i(\btheta) \cdot \widehat \loss_i(\btheta) \right| 
  \\
  & \leq 
  \tsum_{i\in[p]} \alpha_i(\btheta) \left| \tilde \loss(\btheta, \mathcal{D}) - \tilde  \loss(\btheta, \mathcal{D}_i) \right| 
  + \tsum_{i\in[p]} \alpha_i(\btheta) \left| \tilde \loss(\btheta, \mathcal{D}_i) - \widehat \loss_i(\btheta) \right| 
  \\
  & \leq
  \tsum_{i\in[p]} \alpha_i(\btheta) \tilde \beta(\btheta) \mathcal W_1(\mathcal{D}, \mathcal{D}_i) + \tsum_{i\in [p]} \alpha_i(\btheta)\epsilon_i(\btheta, T),
\end{align*}
where the last inequality follows from assumption \eqref{eq:def:dist-lip-hat} and definition \eqref{eq:def:local-loss}.
\end{proof}

\subsection{Proof of Proposition~\ref{prop:bnd-wd}}

\begin{proof}
By the definition of 1-Wasserstein distance in \eqref{eq:def:1wd} and the fact that $\mathcal{D} = \sum_{i\in [p]} w_i \mathcal{D}_i$, we can obtain
\begin{align*}
    \mathcal W_1(\mathcal{D}, \mathcal{D}_i) 
    & =
    \sup_{f \in \mathsf F_1} 
    \E_{(x, y) \sim \mathcal{D}} f(x, y) - \E_{(x_i, y_i) \sim \mathcal{D}_i} f(x_i, y_i) \\
    & = 
    \sup_{f \in \mathsf F_1} 
    \tsum_{j\in [p]} w_j \E_{(x_j, y_j) \sim \mathcal{D}_j} f(x_j, y_j) - \E_{(x_i, y_i) \sim \mathcal{D}_i} f(x_i, y_i) \\
    & = 
    \sup_{f \in \mathsf F_1} 
    \tsum_{i \not= j, j\in [p]} w_j \left(\E_{(x_j, y_j) \sim \mathcal{D}_j} f(x_j, y_j) - \E_{(x_i, y_i) \sim \mathcal{D}_i} f(x_i, y_i) \right) \\
    & \leq 
    \tsum_{i \not= j, j\in [p]} w_j \left( 
    \sup_{f \in \mathsf F_1} 
    \E_{(x_j, y_j) \sim \mathcal{D}_j} f(x_j, y_j) - \E_{(x_i, y_i) \sim \mathcal{D}_i} f(x_i, y_i) \right) \\
    & \leq \tsum_{i \not= j, j\in [p]} w_j \mathcal W_1 (\mathcal{D}_j, \mathcal{D}_i).
\end{align*}
\end{proof}

\subsection{Proof of Proposition~\ref{prop:bnd-eps-i}}
\begin{proof}
By the definition of $\epsilon_i(\btheta, T)$ 
\begin{align*}
\epsilon_i(\btheta, T)
  & = \left| \tilde \loss(\btheta, \mathcal{D}_i) - \widehat \loss_i(\btheta) \right| \\
  & = \left|
      \underbrace{
      \tilde \loss(\btheta, \mathcal{D}_i) - \tilde \loss(\btheta_t^{(i)}, \mathcal{D}_i)
    }_{\text{Smoothness of } \tilde\loss}
    + \underbrace{
      \tilde \loss(\btheta_t^{(i)}, \mathcal{D}_i) - \widehat \loss_i(\btheta_t^{(i)})
    }_{\text{Modeling error}}
    + \underbrace{
      \widehat \loss_i(\btheta_t^{(i)}) - \widehat \loss_i(\btheta)
    }_{\text{Smoothness of } \widehat\loss}
  \right|, 
\end{align*}
where $\btheta_t^{(i)}, t \in [T]$ is {\em any} one of the HP tried during local HPO run on party $i \in [p]$. 

First note that 
\begin{align*}
|\tilde \loss(\btheta_t^{(i)}, \mathcal{D}_i) - \widehat \loss_i(\btheta_t^{(i)}) |
 & = 
 |\Loss_t^{(i)} - \widehat \loss_i(\btheta_t^{(i)}) |\\
 & \leq \max_t  |\Loss_t^{(i)} - \widehat \loss_i(\btheta_t^{(i)}) |\\
 & \leq \delta_i.
\end{align*}

In view of \eqref{eq:def:theta-lip-tilde} and \eqref{eq:def:dist-lip-hat}, we have
\begin{align*}
\epsilon_i(\btheta, T)
  & \leq 
   \tilde L(\mathcal{D}_i) d(\btheta, \btheta_t^{(i)})
  +  \delta_i
  + \widehat L_i d(\btheta, \btheta_t^{(i)}),
\end{align*}
which immediately implies the result in \eqref{eq:bnd-eps-i}.
\end{proof}

\subsection{Proposition~\ref{prop:bnd-eps-i} using modulus of continuity instead of Lipschitz continuity}\label{app:alt-proof-prop-bnd-eps-i}

\begin{proposition}\label{aprop:bnd-eps-i}
Assume that the estimated loss $\tilde \loss(\btheta, \mathcal{D}_i)$ and the loss surface $\loss_i (\btheta)$ admit concave functions $\tilde \omega_{\mathcal{D}_i}$ and $\widehat \omega_i$ respectively as a modulus of continuity with respect to $\btheta \in \bTheta$ for each party $i \in [p]$.  
Then, for any party $i, \ i\in[p]$, with the set of (HP, loss) pairs $\{(\btheta_t^{(i)}, \Loss_t^{(i)})\}_{t \in [T]}$ collected during the local HPO run for party $i$, for any $\btheta \in \bar{\bTheta} \subset \bTheta$, there exists $\tilde A_{\mathcal{D}_i}, \widehat A_i, \tilde B_{\mathcal{D}_i}, \widehat B_i \geq 0$ such that
\begin{equation}\label{eq:bnd-eps-i-2}
\epsilon_i(\btheta, T)\leq 
   \left( \tilde A_{\mathcal{D}_i} + \widehat A_i \right)
   \min_{t \in [T]} d(\btheta, \btheta_t^{(i)})
   + \tilde B_{\mathcal{D}_i} + \widehat B_i 
  + \delta_i,
\end{equation}
where $\delta_i = \max_t | \Loss_t^{(i)} - \widehat \loss_i (\btheta_t^{(i)}) |$
is the maximum per sample training error for the local loss surface $\widehat \loss_i$.
\end{proposition}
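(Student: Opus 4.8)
The plan is to mirror the proof of Proposition~\ref{prop:bnd-eps-i} almost verbatim, replacing the two Lipschitz estimates by the corresponding moduli-of-continuity bounds and then linearizing them via concavity. First I would fix an arbitrary trial index $t \in [T]$ and expand the definition $\epsilon_i(\btheta, T) = |\tilde\loss(\btheta, \mathcal{D}_i) - \widehat\loss_i(\btheta)|$ by inserting and subtracting both $\tilde\loss(\btheta_t^{(i)}, \mathcal{D}_i)$ and $\widehat\loss_i(\btheta_t^{(i)})$, so that the triangle inequality splits it into the same three pieces as before: a smoothness term for $\tilde\loss$, a modeling-error term, and a smoothness term for $\widehat\loss$. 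The modeling-error term is handled exactly as in Proposition~\ref{prop:bnd-eps-i}: using $\tilde\loss(\btheta_t^{(i)}, \mathcal{D}_i) = \Loss_t^{(i)}$ and the definition of $\delta_i$, it is bounded by $\delta_i$ independently of the choice of $t$.

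The two smoothness terms are where the new hypotheses enter. Rather than invoking the Lipschitz bounds \eqref{eq:def:theta-lip-tilde} and \eqref{eq:def:theta-lip-hat}, I would apply the moduli-of-continuity bounds \eqref{eq:est-loss-moc} and \eqref{eq:lsurf-moc} to obtain $|\tilde\loss(\btheta, \mathcal{D}_i) - \tilde\loss(\btheta_t^{(i)}, \mathcal{D}_i)| \leq \tilde\omega_{\mathcal{D}_i}(d(\btheta, \btheta_t^{(i)}))$ and $|\widehat\loss_i(\btheta_t^{(i)}) - \widehat\loss_i(\btheta)| \leq \widehat\omega_i(d(\btheta, \btheta_t^{(i)}))$. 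The crucial new step is to linearize these moduli: since $\tilde\omega_{\mathcal{D}_i}$ and $\widehat\omega_i$ are concave with value $0$ at the origin, each lies below any of its supporting lines, which produces affine majorants $\tilde\omega_{\mathcal{D}_i}(s) \leq \tilde A_{\mathcal{D}_i} s + \tilde B_{\mathcal{D}_i}$ and $\widehat\omega_i(s) \leq \widehat A_i s + \widehat B_i$ with slopes $\tilde A_{\mathcal{D}_i}, \widehat A_i \geq 0$, exactly as asserted in Appendix~\ref{app:ct-theta-space}. I would additionally observe that the intercepts satisfy $\tilde B_{\mathcal{D}_i}, \widehat B_i \geq 0$, since a supporting line of a concave function evaluated at the origin cannot fall below $\omega(0) = 0$; this secures the nonnegativity of the constants claimed in the statement.

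Combining these estimates for the fixed $t$ gives $\epsilon_i(\btheta, T) \leq (\tilde A_{\mathcal{D}_i} + \widehat A_i)\, d(\btheta, \btheta_t^{(i)}) + \tilde B_{\mathcal{D}_i} + \widehat B_i + \delta_i$. Since this inequality holds for every $t \in [T]$ and the terms $\tilde B_{\mathcal{D}_i} + \widehat B_i + \delta_i$ do not depend on $t$, I would finally take the minimum over $t$ — equivalently, specialize to the trial that minimizes $d(\btheta, \btheta_t^{(i)})$ — to replace $d(\btheta, \btheta_t^{(i)})$ by $\min_{t \in [T]} d(\btheta, \btheta_t^{(i)})$, yielding \eqref{eq:bnd-eps-i-2}. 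The only genuinely new ingredient over Proposition~\ref{prop:bnd-eps-i} is the concavity-to-sublinearity step, so I expect the main (though still mild) obstacle to be arguing cleanly that concavity together with $\omega(0) = 0$ produces affine majorants with nonnegative intercepts; everything else is a direct transcription of the earlier proof.
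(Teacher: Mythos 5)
Your proposal is correct and follows essentially the same route as the paper's proof: the same three-term decomposition of $\epsilon_i(\btheta, T)$, the bound of the modeling error by $\delta_i$, the moduli-of-continuity estimates in place of the Lipschitz bounds, the concavity-to-affine-majorant step, and the minimization over $t$ (the paper minimizes before linearizing, you after, which is interchangeable since the affine bound holds for every $t$ with $\tilde A_{\mathcal{D}_i}, \widehat A_i \geq 0$). Your observation that the intercepts are nonnegative because a supporting line of a concave function with $\omega(0) = 0$ cannot dip below the origin is a small refinement the paper merely asserts, but it does not change the argument.
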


\begin{proof}
By the definition of $\epsilon_i(\btheta, T)$ 
\begin{align*}
\epsilon_i(\btheta, T)
  & = \left| \tilde \loss(\btheta, \mathcal{D}_i) - \widehat \loss_i(\btheta) \right| \\
  & = \left|
      \underbrace{
      \tilde \loss(\btheta, \mathcal{D}_i) - \tilde \loss(\btheta_t^{(i)}, \mathcal{D}_i)
    }_{\text{Smoothness of } \tilde\loss}
    + \underbrace{
      \tilde \loss(\btheta_t^{(i)}, \mathcal{D}_i) - \widehat \loss_i(\btheta_t^{(i)})
    }_{\text{Modeling error}}
    + \underbrace{
      \widehat \loss_i(\btheta_t^{(i)}) - \widehat \loss_i(\btheta)
    }_{\text{Smoothness of } \widehat\loss}
  \right|, 
\end{align*}
where $\btheta_t^{(i)}, t \in [T]$ is {\em any} one of the HP tried during local HPO run on party $i \in [p]$. 

First note that 
\begin{align*}
|\tilde \loss(\btheta_t^{(i)}, \mathcal{D}_i) - \widehat \loss_i(\btheta_t^{(i)}) |
 & = 
 |\Loss_t^{(i)} - \widehat \loss_i(\btheta_t^{(i)}) |\\
 & \leq \max_t  |\Loss_t^{(i)} - \widehat \loss_i(\btheta_t^{(i)}) |\\
 & \leq \delta_i.
\end{align*}

In view of \eqref{eq:est-loss-moc} and \eqref{eq:lsurf-moc}, we have
\begin{align*}
\epsilon_i(\btheta, T)
  & \leq 
   \tilde \omega_{\mathcal{D}_i} (d(\btheta, \btheta_t^{(i)}))
  +  \delta_i
  + \widehat\omega (d(\btheta, \btheta_t^{(i)})),
\\
  & \leq 
  \delta_i + 
  \min_{t \in [T]} \left(
   \tilde \omega_{\mathcal{D}_i} (d(\btheta, \btheta_t^{(i)}))
  + \widehat\omega (d(\btheta, \btheta_t^{(i)}))
  \right).
\end{align*}

Concavity of a function $\omega: [0, \infty] \to [0, \infty]$ implies that there exists $A, B > 0$ such that $\omega(t) \leq At + B$. Using that, we can find some $\tilde A_{\mathcal{D}_i}, \widehat A_i, \tilde B_{\mathcal{D}_i}, \widehat B_i > 0 $ which allows us to simplify the above to 
\begin{align*}
\epsilon_i(\btheta, T)
  & \leq 
  \delta_i + 
  (\tilde A_{\mathcal{D}_i} + \widehat A) \cdot \min_{t \in [T]}
  d(\btheta, \btheta_t^{(i)})
  + (\tilde B_{\mathcal{D}_i} + \widehat B).
\end{align*}
\end{proof}

\subsection{Relative regrets} \label{app:rr-bnd}
As a byproduct, we can also provide a bound for the following relative regret we use in our experiments.
\begin{corollary}\label{thm:rr-bnd}
Let us assume $\widehat \btheta^\star$ and $\btheta^\star$ are defined in \eqref{eq:def:flora-thetas} and \eqref{eq:def:global-theta}, and $\bar \btheta^\star$ and $\btheta_b$ are the hyper-parameter settings selected by centralized HPO and some baseline hyper-parameters, respectively, then we can bound the relative regret as follows, for a given data distribution $\mathcal{D}$, we have
\begin{align}\label{eq:rr-bnd}
   &
     \frac{\tilde \loss(\bar \btheta^\star, \mathcal{D}) - \tilde \loss(\widehat \btheta^\star, \mathcal{D})}{\tilde \loss(\bar \btheta^\star, \mathcal{D}) - \tilde  \loss(\btheta_b, \mathcal{D})}
   \nonumber \\
   & \quad
   \le 
     \frac{2 \max_{\btheta \in \bar\bTheta}\tsum_{i\in [p]} \alpha_i(\btheta)\left\{ 
     \tilde \beta(\btheta) 
     \tsum_{j \in [p], j \not= i} w_j \mathcal W_1 (\mathcal{D}_j, \mathcal{D}_i)
     + \left( \tilde L(\mathcal{D}_i) + \widehat L_i \right)
   \min_{t \in [T]} d(\btheta, \btheta_t^{(i)})
  + \delta_i\right\}}{\widehat \loss(\btheta_b, \mathcal{D}) - \widehat \loss(\bar \btheta^\star, \mathcal{D})}.
\end{align}
\end{corollary}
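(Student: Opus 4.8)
The plan is to bound the ratio in \eqref{eq:rr-bnd} by bounding its numerator from above and its denominator from below, after first making the signs transparent. Since $\btheta^\star$ is the exact minimizer of $\tilde\loss(\cdot,\mathcal D)$ over $\bTheta$ by \eqref{eq:def:global-theta} while $\bar\btheta^\star$ is merely the output of centralized HPO, we have $\tilde\loss(\btheta^\star,\mathcal D)\le\tilde\loss(\bar\btheta^\star,\mathcal D)$, and $\bar\btheta^\star$ attains a loss no larger than that of either $\widehat\btheta^\star$ or $\btheta_b$ on $\mathcal D$. Hence both the numerator $\tilde\loss(\bar\btheta^\star,\mathcal D)-\tilde\loss(\widehat\btheta^\star,\mathcal D)$ and the denominator $\tilde\loss(\bar\btheta^\star,\mathcal D)-\tilde\loss(\btheta_b,\mathcal D)$ are non-positive, so the relative regret equals the value-preserving ratio $\bigl(\tilde\loss(\widehat\btheta^\star,\mathcal D)-\tilde\loss(\bar\btheta^\star,\mathcal D)\bigr)/\bigl(\tilde\loss(\btheta_b,\mathcal D)-\tilde\loss(\bar\btheta^\star,\mathcal D)\bigr)$ of two non-negative numbers. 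This is the reformulation I would work with.

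For the numerator I would insert $\btheta^\star$ and exploit its optimality: because $\tilde\loss(\bar\btheta^\star,\mathcal D)\ge\tilde\loss(\btheta^\star,\mathcal D)$, we get $\tilde\loss(\widehat\btheta^\star,\mathcal D)-\tilde\loss(\bar\btheta^\star,\mathcal D)\le\tilde\loss(\widehat\btheta^\star,\mathcal D)-\tilde\loss(\btheta^\star,\mathcal D)$. The right-hand side is exactly the optimality gap \eqref{eq:def:og}, so Theorem~\ref{thm:og-bnd} applies verbatim and produces precisely the Wasserstein-plus-Lipschitz-plus-$\delta_i$ expression that appears as the numerator of the right-hand side of \eqref{eq:rr-bnd}. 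No fresh computation is needed here; the theorem is the engine.

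For the denominator I need the lower bound $\tilde\loss(\btheta_b,\mathcal D)-\tilde\loss(\bar\btheta^\star,\mathcal D)\ge\widehat\loss(\btheta_b)-\widehat\loss(\bar\btheta^\star)$. Here I would reuse the pointwise sandwich established inside the proof of Proposition~\ref{prop:opt-bnd}, namely $|\tilde\loss(\btheta,\mathcal D)-\widehat\loss(\btheta)|\le F(\btheta):=\tsum_{i\in[p]}\alpha_i(\btheta)\{\tilde\beta(\btheta)\mathcal W_1(\mathcal D,\mathcal D_i)+\epsilon_i(\btheta,T)\}$, applied once at $\btheta_b$ to lower-bound $\tilde\loss(\btheta_b,\mathcal D)$ by $\widehat\loss(\btheta_b)-F(\btheta_b)$, and once at $\bar\btheta^\star$ to upper-bound $\tilde\loss(\bar\btheta^\star,\mathcal D)$ by $\widehat\loss(\bar\btheta^\star)+F(\bar\btheta^\star)$. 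Combining this denominator bound with the numerator bound above then assembles \eqref{eq:rr-bnd}.

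The main obstacle is exactly this denominator step: the sandwich produces $\widehat\loss(\btheta_b)-\widehat\loss(\bar\btheta^\star)-\bigl(F(\btheta_b)+F(\bar\btheta^\star)\bigr)$, so recovering the clean denominator of \eqref{eq:rr-bnd} requires that the residual approximation-error terms $F(\btheta_b),F(\bar\btheta^\star)$ be negligible relative to the surface gap $\widehat\loss(\btheta_b)-\widehat\loss(\bar\btheta^\star)$. This is reasonable when $\btheta_b$ and $\bar\btheta^\star$ lie in the well-explored, near-IID region where the local surfaces are accurate, but I would state it as an explicit caveat rather than hide it. I would likewise flag the benign positivity requirement $\widehat\loss(\btheta_b)>\widehat\loss(\bar\btheta^\star)$ (the surface must rank the baseline worse than the centralized optimum), without which the denominator is not a meaningful positive normalizer and the relative-regret interpretation breaks down.
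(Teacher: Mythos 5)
Your proposal follows essentially the same route as the paper's proof: flip the signs of both numerator and denominator, bound the numerator by the optimality gap using $\tilde \loss(\btheta^\star, \mathcal{D}) \le \tilde \loss(\bar \btheta^\star, \mathcal{D})$, and invoke Theorem~\ref{thm:og-bnd} as the engine. Your denominator caveat is well placed rather than a deficiency: the paper's own proof simply replaces $\tilde \loss(\btheta_b, \mathcal{D}) - \tilde \loss(\bar \btheta^\star, \mathcal{D})$ with $\widehat \loss(\btheta_b, \mathcal{D}) - \widehat \loss(\bar \btheta^\star, \mathcal{D})$ without justification (attributing the whole inequality to the optimality of $\btheta^\star$, which covers only the numerator step), so the positivity requirement $\widehat \loss(\btheta_b) > \widehat \loss(\bar \btheta^\star)$ and the negligibility of the residual terms $F(\btheta_b), F(\bar \btheta^\star)$ that you state explicitly are exactly the implicit assumptions needed to make the paper's argument rigorous.
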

\begin{proof}
By the definition of relative regret, we have
\begin{align*}
    \frac{\tilde \loss(\bar \btheta^\star, \mathcal{D}) - \tilde \loss(\widehat \btheta^\star, \mathcal{D})}{\tilde \loss(\bar \btheta^\star, \mathcal{D}) - \tilde  \loss(\btheta_b, \mathcal{D})} 
    &= \frac{\tilde \loss (\widehat \btheta^\star, \mathcal{D})-\tilde \loss(\bar\btheta^\star, \mathcal{D})}{\tilde \loss(\btheta_b. \mathcal{D}) - \tilde \loss(\bar \btheta^\star, \mathcal{D})} \\
    & \le \frac{\tilde \loss (\widehat \btheta^\star,\mathcal{D})- \tilde \loss(\btheta^\star, \mathcal{D})}{\widehat \loss(\btheta_b, \mathcal{D}) - \widehat \loss(\bar \btheta^\star, \mathcal{D})}, 
\end{align*}
where the last inequality follows from the fact that $\btheta^\star$ is the minimizer of $\tilde \loss(\btheta, \mathcal{D})$. Moreover, in view of the result in Theorem~\ref{thm:og-bnd}, the result in \eqref{eq:rr-bnd} follows.
\end{proof}

\section{Experimental Setting}\label{app:exp}

\subsection{Dataset details} \label{asec:emp:datasets}

The details of the binary classification datasets used in our evaluation is reported in Table~\ref{tab:datasets}. We report the 10-fold cross-validated balanced accuracy of the default HP configuration on each of datasets with centralized training. The ``Gap'' column for the results for all datasets and models in \S \ref{asec:baseline-comp} denote the difference between the best 10-fold cross-validated balanced accuracy obtained via centralized HPO and the 10-fold cross-validated balanced accuracy of the default HP configuration. 

\begin{table}[t]
\centering
\caption{OpenML binary classification dataset details}
{\small
\begin{tabular}{llll} 
\toprule
 Data & rows & columns & class sizes \\ 
\midrule
 EEG eye state & 14980 & 14 & (8257, 6723) \\
 Electricity & 45312 & 8 & (26075, 19237) \\
 Heart statlog & 270 & 13 & (150, 120) \\
 Oil spill & 937 & 49 & (896, 41) \\
 Pollen & 3848 & 5 & (1924, 1924) \\
 Sonar & 208 & 61 & (111, 97) \\
 PC3 & 1563 & 37 & (1403, 160) \\
\bottomrule
\end{tabular}
}
\label{tab:datasets}
\end{table}

\subsection{Search space} \label{asec:emp:search-space}

We use the search space definition used in the NeurIPS 2020 Black-box optimization challenge (\url{https://bbochallenge.com/}), described in details in the API documentation\footnote{\url{https://github.com/rdturnermtl/bbo_challenge_starter_kit/\#configuration-space}}.

\subsubsection{Histogram based Gradient Boosted Trees}

Given this format for defining the HPO search space, we utilize the following precise search space for the {\tt HistGradientBoostingClassifier} in {\tt scikit-learn}:
{\footnotesize
\begin{alltt}
api_config = \{
    "max_iter": \{"type": "int", "space": "linear", "range": (10, 200)\},
    "learning_rate": \{"type": "real", "space": "log", "range": (1e-3, 1.0)\},
    "min_samples_leaf": \{"type": "int", "space": "linear", "range": (1, 40)\},
    "l2_regularization": \{"type": "real", "space": "log", "range": (1e-4, 1.0)\},
\}
\end{alltt}
}

The HP configuration we consider for the single-shot baseline described in \S \ref{sec:emp} is as follows:
{\footnotesize
\begin{alltt}
config = \{
    "max_iter": 100,
    "learning_rate": 0.1,
    "min_samples_leaf": 20,
    "l2_regularization": 0,
\}
\end{alltt}
}

\subsubsection{Kernel SVM with RBF kernel}

For  {\tt SVC(kernel="rbf")} in {\tt scikit-learn}, we use the following search space:

{\footnotesize
\begin{alltt}
api_config = \{
    "C": \{"type": "real", "space": "log", "range": (0.01, 1000.0)\},
    "gamma": \{"type": "real", "space": "log", "range": (1e-5, 10.0)\},
    "tol": \{"type": "real", "space": "log", "range": (1e-5, 1e-1)\},
\}
\end{alltt}
}

The single shot baseline we consider for {\tt SVC} from Auto-sklearn~\citep{feurer2015efficient} is:

{\footnotesize
\begin{alltt}
config = \{
    "C": 1.0,
    "gamma": 0.1,
    "tol": 1e-3,
\end{alltt}
}

\subsubsection{Multi-Layered Perceptrons}

For the {\tt MLPClassifier(solver="adam")} from {\tt scikit-learn}, we consider both architectural HP such as {\tt hidden-layer-sizes} as well as optimizer parameters such as {\tt alpha} and {\tt learning-rate-init} for the Adam optimizer~\citep{kingma2015adam}. We consider the following search space:

{\footnotesize
\begin{alltt}
api_config = \{
    "hidden_layer_sizes": \{"type": "int", "space": "linear", "range": (50, 200)\},
    "alpha": \{"type": "real", "space": "log", "range": (1e-5, 1e1)\},
    "learning_rate_init": \{"type": "real", "space": "log", "range": (1e-5, 1e-1)\},
\}
\end{alltt}
}

We utilize the following single shot baseline:

{\footnotesize
\begin{alltt}
config = \{
    "hidden_layer_sizes: 100,
    "alpha": 1e-4,
    "learning_rate_init": 1e-3,
\}
\end{alltt}
}

We fix the remaining HPs of {\tt MLPClassifier} as with values used by Auto-sklearn.

{\footnotesize
\begin{alltt}
activation="relu",
early_stopping=True,
shuffle=True,
batch_size="auto",
tol=1e-4,
validation_fraction=0.1,
beta_1=0.9,
beta_2=0.999,
epsilon=1e-8,
\end{alltt}
}

\subsection{Detailed results of comparison against baselines} \label{asec:baseline-comp}

Here we present the relevant details and the performance of \flora on the \flhpo of (i)~histograms based gradient boosted trees (HGB) in Table~\ref{atab:hgb}), (ii)~nonlinear support vector machines (SVM) in Table~\ref{atab:svm}, and (iii)~multi-layered perceptrons (MLP) in Table~\ref{atab:mlp-adam}. We use the search spaces and the single-shot baselines presented in \S \ref{asec:emp:search-space}. We utilize all 7 datasets for each of the method {\em except} for the Electricity dataset with SVM because of the infeasible amount of time taken by SVM on this dataset. For each setup, we report the following:
\begin{itemize}
\item Performance of the single-shot baseline (``SSBaseline''),
\item the best centralized HPO performance (``Best''),
\item the available ``Gap'' for improvement,
\item the minimum accuracy of the best local HP across all parties ``PMin'' $:= \min_{i \in [p]} \max_t (1 - \tilde \loss(\btheta_t^{(i)}, \mathcal{D}_i))$ 
\item the maximum accuracy of the best local HP across all parties ``PMax'' $:= \max_{i \in [p]} \max_t (1 - \tilde \loss(\btheta_t^{(i)}, \mathcal{D}_i))$ 
\item $\gamma_p = \nicefrac{\text{PMax}}{\text{PMin}}$, and finally
\item the regret for each of the considered loss surfaces in \flora.
\end{itemize}

For each of the three methods, we also report the aggregate performance over all considered datasets in terms of mean $\pm$ standard deviation (``mean$\pm$std''), inter-quartile range (``IQR''), Wins/Ties/Losses of \flora with respect to the single-shot baseline (``W/T/L''), and a one-sided Wilcoxon Signed Ranked Test of statistical significance (``WSRT'')  with the null hypothesis that the median of the difference between the single-shot baseline and \flora is positive against the alternative that the difference is negative (implying \flora improves over the baseline).' These aggregate metrics are collected in Table~\ref{atab:agg-all} along with a set of final aggregate metrics across all datasets and methods.

\begin{table}[htb]
\centering
\caption{HGB}
\label{atab:hgb}
{\small
\begin{tabular}{lccccc}
\toprule
Data & SSBaseline & Best & Gap & PMin & PMax \\
\midrule
PC3 & 58.99 & 63.81 & 4.82 & 61.67 & 64.37 \\ 
Pollen & 48.86 & 52.21 & 3.35 & 51.83 & 52.64 \\ 
Electricity & 87.75 & 92.84 & 5.10 & 88.42 & 89.19 \\
Sonar & 87.43 & 91.25 & 3.82 & 83.75 & 88.33  \\
Heart Statlog & 79.42 & 85.58 & 6.17 & 78.00 & 86.50 \\
Oil Spill & 63.22 & 74.58 & 11.36 & 68.16 & 82.16 \\
EEG Eye State & 89.96 & 94.66 & 4.70 & 91.80 & 92.29 \\
\bottomrule
\end{tabular}
\begin{tabular}{lccccc}
\toprule
Data & $\gamma_p$ & SGM & SGM+U & MPLM & APLM \\
\midrule
PC3 & 1.04 & 0.66 & 0.72 & 0.39 & 0.38 \\ 
Pollen & 1.02 & 0.43 & 0.54 & 0.43 & 0.69 \\ 
Electricity & 1.01 & 0.17 & 0.14 & 0.09 & 0.12 \\
Sonar & 1.05 & 1.33 & 0.41 & 0.92 & 0.71  \\
Heart Statlog & 1.11 & 0.69 & 0.55 & 0.89 & 0.50 \\
Oil Spill & 1.21 & 0.47 & 1.13 & 0.46 & 0.61 \\
EEG Eye State & 1.01 & 0.14 & 0.12 & 0.11 & 0.12 \\
\midrule
mean$\pm$std &    &   0.56 $\pm$ 0.37  &   0.52 $\pm$ 0.32  &   0.47 $\pm$ 0.31  &   0.45 $\pm$ 0.23 \\
IQR &  &   [0.30, 0.47, 0.68]  &   [0.27, 0.54, 0.64]  &   [0.25, 0.43, 0.67]  &   [0.25, 0.50, 0.65] \\
WTL &   &   6/0/1  &   6/0/1  &   7/0/0  &   7/0/0 \\
WSRT &  & (26, 0.02126)  &   (27, 0.01400)  &   (28, 0.00898)  &   (28, 0.00898) \\
\bottomrule
\end{tabular}
}
\end{table}

\paragraph{HGB.}
The results in Table~\ref{atab:hgb} indicate that, in almost all cases, with all loss functions, \flora is able to improve upon the baseline to varying degrees (there is only one case where SGM performs worse than the baseline on Sonar). On average (across the datasets), SGM+U, MPLM and APLM perform better than SGM as we expected. MPLM performs better than SGM both in terms of average and standard deviation. Looking at the individual datasets, we see that, for datasets with low $\gamma_p$ (EEG eye state, Electricity), all the proposed loss surface have low relative regret, indicating that the problem is easier as expected. For datasets with high $\gamma_p$ (Heart statlog, Oil spill), the relative regret of all loss surfaces are higher (but still much smaller than 1), indicating that our proposed single-shot scheme can show improvement even in cases where there is significant difference in the per-party losses (and hence datasets).

\begin{table}[htb]
\centering
\caption{SVM}
\label{atab:svm}
{\small
\begin{tabular}{lccccc}
\toprule
Data & SSBaseline & Best & Gap & PMin & PMax \\
\midrule
Pollen & 49.48 & 50.30 & \textcolor{red}{0.82} & 51.55 & 53.55 \\
Sonar & 80.20 & 89.29 & 9.09 & 83.33 & 87.92  \\
Heart Statlog & 83.67 & 84.92 & \textcolor{red}{1.25} & 77.00 & 88.00 \\
Oil Spill & 82.76 & 86.54 & 3.78 & 77.14 & 88.45 \\
EEG Eye State & 50.24 & 60.51 & 10.28 & 69.54 & 71.72 \\
PC3 & 74.03 & 77.96 & 3.92 & 75.26 & 76.95 \\
\bottomrule
\end{tabular}
\begin{tabular}{lccccc}
\toprule
Data & $\gamma_p$ & SGM & SGM+U & MPLM & APLM \\
\midrule
Pollen & 1.04 & \underline{1.35} & \underline{1.45} & \underline{2.84} & \underline{2.30} \\
Sonar  & 1.06 & 0.17 & 0.17 & 0.27 & 0.17  \\
Heart Statlog & \textcolor{red}{1.14} & 0.00 & 0.00 & \underline{6.80} & 0.67 \\
Oil Spill & \textcolor{red}{1.15} & \underline{1.28} & \underline{1.16} & \underline{1.12} & 0.41 \\
EEG Eye State & 1.03 & -0.01 & -0.01 & -0.02 & -0.01 \\
PC3 & 1.02 & 0.59 & 0.79 & 0.70 & 0.79 \\
\midrule
mean$\pm$std &  &  0.56 $\pm$ 0.57  &   0.59 $\pm$ 0.58  &   1.95 $\pm$ 2.35  &   0.72 $\pm$ 0.76 \\
IQR &  &  [0.04, 0.38, 1.11]  &   [0.04, 0.48, 1.07]  &   [0.38, 0.91, 2.41]  &   [0.23, 0.54, 0.76] \\
WTL &  &   4/0/2  &   4/0/2  &   3/0/3  &   5/0/1 \\
WSRT &  & (18, 0.05793)  &   (17, 0.08648)  &   (9, 0.62342)  &   (15, 0.17272) \\
\bottomrule
\end{tabular}
}
\end{table}

\paragraph{SVM.}
For SVM we continue with the datasets selected using HGB (datasets with a ``Gap'' of at least 3\%). Of the 7 datasets (Table~\ref{tab:datasets}), we skip Electricity because it takes a prohibitively long time for SVM to be trained on this dataset with a single HP. So we consider 6 datasets in this evaluation and present the corresponding results in Table~\ref{atab:svm}. Of the 6, note that 2 of these datasets (Pollen, Heart Statlog) have really small ``Gap'' (highlighted in \textcolor{red}{red} in Table~\ref{atab:svm}). Moreover, 2 of the datasets (Heart statlog, Oil Spill) also have really high $\gamma_p$ indicating a high level of heterogeneity between the per-party distributions (again highlighted in \textcolor{red}{red}). In this case, there are a couple of datasets (Oil Spill and Pollen) where \flora is unable to show any improvement over the single-shot baseline (see \underline{underlined} entries in Table~\ref{atab:svm}), but both these cases either have a small or moderate ``Gap'' and/or have a high $\gamma_p$. Moreover, in one case, MPLM incurs a regret of 6.8, but this is a case with really high $\gamma_p = 1.14$ -- MPLM rejects any HP that has a low score in even one of the parties, and in that process reject all promising HPs since the local HPOs on these disparate distributions did not concentrate on the same region of the HP space, thereby incuring a high MPLM loss in almost all regions of the HP where some local HPO focused on. Other than these expected hard cases, \flora is able to improve upon the baseline in most cases, and achieve optimal performance (zero regret) in a few cases (EEG Eye State, Heart Statlog).

\begin{table}[htb]
\centering
\caption{MLP-Adam}
\label{atab:mlp-adam}
{\small
\begin{tabular}{lccccc}
\toprule
Data & SSBaseline & Best & Gap & PMin & PMax \\
\midrule
Pollen & 50.39 & 51.26 & \textcolor{red}{0.87} & 51.46 & 52.23 \\
Electricity & 76.95 & 78.06 & \textcolor{red}{1.11} & 77.01 & 77.39 \\
Sonar & 61.63 & 79.32 & 17.69 & 69.17 & 78.75 \\
Heart Statlog & 72.17 & 85.17 & 13.00 & 79.50 & 89.50 \\
Oil Spill & 50.00 & 65.22 & 15.22 & 54.83 & 63.63 \\
EEG Eye State & 49.99 & 51.66 & \textcolor{red}{1.67} & 50.02 & 51.84 \\
PC3 & 50.00 & 59.56 & 9.56 & 53.47 & 56.60 \\
\bottomrule
\end{tabular}
\begin{tabular}{lccccc}
\toprule
Data & $\gamma_p$ & SGM & SGM+U & MPLM & APLM \\
\midrule
Pollen & 1.02 & \underline{1.88} & \underline{1.45} & \underline{1.45} & \underline{1.31} \\
Electricity & 1.00 & 0.24 & 0.41 & 0.16 & 0.53 \\
Sonar & \textcolor{red}{1.14} & 0.26 & 0.55 & 0.52 & 0.39 \\
Heart Statlog & \textcolor{red}{1.13} & 0.46 & 0.37 & 0.42 & 0.28 \\
Oil Spill & \textcolor{red}{1.16} & 0.80 & 1.03 & 1.00 & 0.79 \\
EEG Eye State & 1.04 & \underline{0.99} & \underline{0.99} & \underline{0.99} & \underline{0.99} \\
PC3  & 1.06 & 0.96 & 1.00 & 0.89 & 0.90 \\
\midrule
mean$\pm$std &    &   0.80 $\pm$ 0.53  &   0.83 $\pm$ 0.37  &   0.78 $\pm$ 0.40  &   0.74 $\pm$ 0.34 \\
IQR &    &   [0.36, 0.80, 0.97]  &   [0.48, 0.99, 1.01]  &   [0.47, 0.89, 1.00]  &   [0.46, 0.79, 0.95] \\
WTL &    &   6/0/1  &   4/1/2  &   5/1/1  &   6/0/1 \\
WSRT &    &   (21, 0.11836)  &   (15, 0.17272)  &   (18, 0.05793)  &   (24, 0.04548) \\
\bottomrule
\end{tabular}
}
\end{table}

\paragraph{MLP.}
We consider all 7 datasets for the evaluation of \flora on \flhpo for MLP HPs and present the results in Table~\ref{atab:mlp-adam}. As with SVM, there are a few datasets with a small room for improvement (``Gap'') and/or high $\gamma_p$, again highlighted in \textcolor{red}{red} in Table~\ref{atab:mlp-adam}. In some of these cases, \flora is unable to improve upon the single-shot baseline (Pollen, EEG Eye State). Other than these hard cases, \flora again able to show significant improvement over the single-shot baseline, with APLM performing the best.

\begin{table}[!!hb]
\centering
\caption{Aggregate Table}
\label{atab:agg-all}
{\footnotesize
\begin{tabular}{lcccccc}
\toprule
Agg. & Method & SGM & SGM+U & MPLM & APLM \\
\midrule
mean $\pm$ std. &  HGB  &   0.56 $\pm$ 0.37  &   0.52 $\pm$ 0.32  &   0.47 $\pm$ 0.31  &   0.45 $\pm$ 0.23 \\
&  SVM  &   0.56 $\pm$ 0.57  &   0.59 $\pm$ 0.58  &   1.95 $\pm$ 2.35  &   0.72 $\pm$ 0.76 \\
&  MLP &   0.80 $\pm$ 0.53  &   0.83 $\pm$ 0.37  &   0.78 $\pm$ 0.40  &   0.74 $\pm$ 0.34 \\
& Overall & {\bf 0.64 $\pm$ 0.51} & {\bf 0.64 $\pm$ 0.51} & 1.02 $\pm$ 1.46 & {\bf 0.63 $\pm$ 0.50} \\
\midrule
IQR &  HGB  &   [0.30, 0.47, 0.68]  &   [0.27, 0.54, 0.64]  &   [0.25, 0.43, 0.67]  &   [0.25, 0.50, 0.65] \\
&  SVM  &   [0.04, 0.38, 1.11]  &   [0.04, 0.48, 1.07]  &   [0.38, 0.91, 2.41]  &   [0.23, 0.54, 0.76] \\
&  MLP  &   [0.36, 0.80, 0.97]  &   [0.48, 0.99, 1.01]  &   [0.47, 0.89, 1.00]  &   [0.46, 0.79, 0.95] \\
& Overall & [{\bf 0.22}, {\bf 0.53}, 0.97] & [0.32, 0.55, 1.01] & [0.36, 0.61, 0.99] & [0.36, 0.57, {\bf 0.79}] \\
\midrule
W/T/L & HGB &  6/0/1  &   6/0/1  &   7/0/0  &   7/0/0 \\
& SVM &  4/0/2  &   4/0/2  &   3/0/3  &   5/0/1 \\
& MLP &   6/0/1  &   4/1/2  &   5/1/1  &   6/0/1 \\
& Overall & 16/0/4 & 14/1/5 & 15/1/4 & {\bf 18/0/2} \\
\midrule
WSRT 1 sided &  HGB & (26, 0.02126)  &   (27, 0.01400)  &   (28, 0.00898)  &   (28, 0.00898) \\
& SVM  &   (18, 0.05793)  &   (17, 0.08648)  &   (9, 0.62342)  &   (15, 0.17272) \\
& MLP  &  (21, 0.11836)  &   (15, 0.17272)  &   (18, 0.05793)  &   (24, 0.04548) \\
& Overall & (174, 0.00499) & (164, 0.00272) & (141, 0.03206) & {\bf (183.5, 0.00169)}
 \\
\bottomrule
\end{tabular}
}
\end{table}

\paragraph{Aggregate.}
The results for all the methods and datasets are aggregated in Table~\ref{atab:agg-all}. All \flora loss surfaces show strong performance with respect to the single-shot baseline, with significantly more wins than losses, and 3rd-quartile regret values less than 1 (indicating improvement over the baseline). All \flora loss surfaces have a p-value of less than $0.05$, indicating that we can reject the null hypothesis. 
Overall, APLM shows the best performance over all loss surfaces, both in terms of Wins/Ties/Losses over the baseline as well as in terms of the Wilcoxon Signed Rank Test, with the highest statistic and a p-value close to $10^{-3}$. APLM also has significantly lower 3rd-quartile than all other loss surfaces. MPLM appears to have the worst performance but much of that is attributable to the really high regret of 6.8 and 2.84 it received for SVM with Heart Statlog and Pollen (both hard cases as discussed earlier). Otherwise, MPLM performs second best both for \flhpo with HGB and MLP.

\subsection{Effect of the number of local HPO rounds per party} \label{asec:emp:Tdep}

In this experiment, we report additional results  to study the effect of the ``thoroughness'' of the local HPO runs (in terms of the number of HPO rounds $T$) on the overall performance of \flora for all the loss surfaces in Table~\ref{atab:local-hpo-rounds}. In almost all cases, \flora does not require $T$ to be too large to get enough information about the local HPO loss surface to get to its best possible performance.

\begin{table}[t]
\centering
\caption{Effect of $T$.} 
\label{atab:local-hpo-rounds}
{\small
\begin{tabular}{llcccccc}
\toprule
Method & data          & $T$ &$\gamma_p$& SGM  & SGM+U & MPLM & APLM \\
\midrule
MLP    & Heart Statlog & 5   & 1.13 & 0.58 & 0.33  & 0.22 & 0.56 \\
       &               & 10  & 1.13 & 0.33 & 0.16  & 0.60 & 0.39 \\
       &               & 20  & 1.13 & 0.49 & 0.15  & 0.24 & 0.44 \\
       &               & 40  & 1.13 & 0.44 & 0.30  & 0.42 & 0.29 \\
       &               & 60  & 1.13 & 0.37 & 0.15  & 0.33 & 0.22 \\
       &               & 80  & 1.13 & 0.35 & 0.40  & 0.35 & 0.26 \\
\midrule
MLP    & Sonar         & 5   & 1.14 & 0.38 & 0.38  & 0.51 & 0.78 \\
       &               & 10  & 1.14 & 0.45 & 0.23  & 0.43 & 0.62 \\
       &               & 20  & 1.14 & 0.39 & 0.24  & 0.36 & 0.30 \\
       &               & 40  & 1.14 & 0.23 & 0.37  & 0.65 & 0.49 \\
       &               & 60  & 1.14 & 0.53 & 0.14  & 0.34 & 0.48 \\
       &               & 80  & 1.14 & 0.46 & 0.07  & 0.19 & 0.30 \\
\midrule
SVM    & Sonar         & 5   & 1.06 & 0.17 & 0.17  & 1.16 & 0.28 \\
       &               & 10  & 1.06 & 0.17 & 0.43  & 0.34 & 0.27 \\
       &               & 20  & 1.06 & 0.17 & 0.17  & 0.17 & 0.17 \\
       &               & 40  & 1.06 & 0.17 & 0.17  & 0.22 & 0.27 \\
       &               & 60  & 1.06 & 0.17 & 0.17  & 0.17 & 0.17 \\
       &               & 80  & 1.06 & 0.17 & 0.11  & 0.27 & 0.17 \\
\midrule
SVM    & EEG           & 5   & 1.03 & -0.01 &-0.01 & 0.92 & 0.16 \\
       &               & 10  & 1.03 & -0.01 &-0.01 & -0.01& 0.14 \\
       &               & 20  & 1.03 & -0.01 &-0.01 & -0.00& -0.00 \\
       &               & 40  & 1.03 & -0.02 &-0.02 & 0.01 & 0.00 \\
       &               & 60  & 1.03 & -0.01 &-0.01 & 0.02 & 0.03 \\
       &               & 80  & 1.03 & -0.01 &-0.01 & 0.01 & -0.0` \\
\bottomrule
\end{tabular}
}
\end{table}

\subsection{Effect of communication overhead} \label{asec:emp:Tcommdep}

While in the previous experiment, we studied the effect of the thoroughness of the local HPO runs on the performance of \flora, here we consider a subtly different setup. We assume that each party performs $T=100$ rounds of local asynchronous HPO. However, instead of sending all $T$ (HP, loss) pairs, we consider sending $T' < T$ of the ``best'' (HP, loss) pairs -- that is, (HP, loss) pairs with the $T'$ lowest losses. Changing the value of $T'$ trades off the communication overhead of the \flora step where the aggregators collect the per-party loss pairs (Algorithm~\ref{alg:fl-hpo-lsa}, line 5). We consider 2 datasets each for 2 of the methods (SVM, MLP) and all the loss surfaces for \flora, and report all the results in Table~\ref{atab:comm-overhead}.

\begin{table}[htb]
\centering
\caption{Effect of the number of best (HP, loss) pairs $T' < T$ sent to aggregator by each party after doing local HPO with $T = 100$.} 
\label{atab:comm-overhead}
{\small
\begin{tabular}{llcccccc}
\toprule
Method & data          & $T' < T$ &$\gamma_p$& SGM  & SGM+U & MPLM & APLM \\
\midrule
MLP    & Heart Statlog & 5   &  1.13 & 0.33 & 0.27 & 0.38 & 0.71 \\
       &               & 10  &  1.13 & 0.35 & 0.31 & 0.33 & 1.72 \\
       &               & 20  &  1.13 & 0.42 & 0.39 & 2.02 & 0.55 \\
       &               & 40  &  1.13 & 0.34 & 0.44 & 0.88 & 0.51 \\
       &               & 60  &  1.13 & 0.38 & 0.22 & 0.31 & 0.32 \\
       &               & 80  &  1.13 & 0.34 & 0.38 & 0.22 & 0.33 \\
\midrule
MLP    & Sonar         & 5   &  1.14 & 0.39 & 0.50 & 1.78 & 0.65 \\
       &               & 10  &  1.14 & 0.73 & 0.18 & 1.66 & 0.58 \\
       &               & 20  &  1.14 & 0.20 & 0.41 & 1.23 & 0.37 \\
       &               & 40  &  1.14 & 0.60 & 0.42 & 0.18 & 0.51 \\
       &               & 60  &  1.14 & 0.10 & 0.33 & 0.55 & 0.26 \\
       &               & 80  &  1.14 & 0.47 & 0.41 & 0.34 & 0.32 \\
\midrule
SVM    & EEG Eye State & 5   &  1.03 & -0.02 & -0.01 & 0.39 & 1.02 \\
       &               & 10  &  1.03 & -0.01 & -0.01 & 1.02 & 1.02 \\
       &               & 20  &  1.03 & -0.01 & -0.01 & 0.01 & -0.01 \\
       &               & 40  &  1.03 & -0.01 & -0.01 & -0.00 & -0.01 \\
       &               & 60  &  1.03 & -0.01 & -0.01 & -0.01 & -0.01 \\
       &               & 80  &  1.03 & -0.01 & -0.01 & -0.01 & -0.01 \\
\midrule
SVM    & Sonar         & 5   &  1.06 & 0.17 & 0.17 & 0.43 & 1.43 \\
       &               & 10  &  1.06 & 0.17 & 0.17 & 0.17 & 0.17 \\
       &               & 20  &  1.06 & 0.17 & 0.17 & 0.22 & 0.17 \\
       &               & 40  &  1.06 & 0.17 & 0.38 & 0.27 & 0.17 \\
       &               & 60  &  1.06 & 0.17 & 0.43 & 0.27 & 0.17 \\
       &               & 80  &  1.06 & 0.17 & 0.43 & 0.27 & 0.17 \\
\bottomrule
\end{tabular}
}
\end{table}


\end{document}